\pgfplotsset{compat=1.17}
\definecolor{OColor}{rgb}{1.0, 0.65, 0.0}
\definecolor{JColor}{rgb}{0.0, 0.75, 1.0}
\definecolor{TColor}{rgb}{1.0, 0.27, 0.0}
\providecommand{\maketitle}{}
\renewcommand{\maketitle}{%
	\par
	\begingroup
	\renewcommand{\thefootnote}{\fnsymbol{footnote}}
	\renewcommand{\@makefnmark}{\hbox to \z@{$^{\@thefnmark}$\hss}}
	\long\def\@makefntext##1{%
		\parindent 1em\noindent
		\hbox to 1.8em{\hss $\m@th ^{\@thefnmark}$}##1
	}
	\thispagestyle{empty}
	\@maketitle
	\@thanks
	\endgroup
	\let\maketitle\relax
	\let\thanks\relax
}
\providecommand{\@maketitle}{}
\renewcommand{\@maketitle}{%
	\vbox{%
		\hsize\textwidth
		\linewidth\hsize
		\vskip 0.1in
		\centering
		{\LARGE\bf \@title\par}
		\def\And{%
		\end{tabular}\hfil\linebreak[0]\hfil%
		\begin{tabular}[t]{c}\bf\rule{\z@}{24\p@}\ignorespaces%
		}
		\def\AND{%
		\end{tabular}\hfil\linebreak[4]\hfil%
		\begin{tabular}[t]{c}\bf\rule{\z@}{24\p@}\ignorespaces%
		}
		\begin{tabular}[t]{c}\bf\rule{\z@}{24\p@}\@author\end{tabular}%
		\vskip 0.3in \@minus 0.1in
	}
}
\newcommand{\Real}{\mathbb{R}}
\newcommand{\RecSpace}{X}
\newcommand{\DataSpace}{Y}
\newcommand{\PClass}[1]{\mathscr{P}_{#1}}
\newcommand{\stochastic}[1]{\mathsf{#1}}
\newcommand{\PDens}{\mathsf{P}}
\newcommand{\Expect}{\operatorname{\mathbb{E}}}
\newcommand{\pdf}{\rho}
\newcommand{\PDist}{\mathcal{W}}
\DeclareMathOperator{\KL}{KL}
\DeclareMathOperator{\ELBO}{ELBO}
\newcommand{\Op}[1]{\operatorname{\mathcal{#1}}}
\newcommand{\ForwardOp}{\Op{A}}
\DeclareMathOperator*{\argmin}{arg\,min}
\DeclareMathOperator*{\argmax}{arg\,max}
\newcommand{\RegFunc}{\Op{R}}
\newcommand{\DataLogLikelihood}{\Op{L}}
\newcommand{\signaldim}{n}
\newcommand{\signal}{x}
\newcommand{\signaltrue}{\signal^*}
\newcommand{\datadim}{l}
\newcommand{\data}{y}
\newcommand{\datanoise}{e}
\newcommand{\regparamrec}{\lambda}
\newcommand{\SignalPrior}{\PDens_{\RecSpace}}
\newcommand{\ModelDistr}{\mathsf{Q}}
\newcommand{\stsignal}{\stochastic{\signal}}
\newcommand{\stsignalnoise}
  {\triangle \stochastic{\signal}}
\newcommand{\signalnoisepar}{\sigma}
\newcommand{\stdata}{\stochastic{\data}}
\newcommand{\stdatanoise}{\stochastic{\datanoise}}
\newcommand{\PriorDictCoeff}{\PDens_{\DictCoeffSet}}
\newcommand{\PriorNoise}{\PDens_{\stsignalnoise}}
\newcommand{\DictClass}{\mathscr{D}}
\newcommand{\Dict}{\mathcal{D}}
\newcommand{\DictMat}{\mathbf{D}}
\newcommand{\dictatom}{d}
\newcommand{\dictcoeffdim}{m}
\newcommand{\dictcoeff}{z}
\newcommand{\dictcoeffpar}{b}
\newcommand{\dictcoeffmode}{{z^*}}
\newcommand{\dictcoeffmodepar}{{b^*}}
\newcommand{\stdictcoeff}{\stochastic{\dictcoeff}}
\newcommand{\DictCoeffSet}{Z}
\newcommand{\SynthesisOp}{\operatorname{\mathcal{S}}}
\newcommand{\TData}{\operatorname{\Sigma}}
\newcommand{\ApproxPdf}{q}
\newcommand{\ExpectApproxPdf}{\Expect_{\ApproxPdf} }
\newcommand{\ApproxStCoef}{\Tilde{\stdictcoeff}}
\newcommand{\estim}[1]{\widehat{#1}}
\DeclareMathOperator{\dint}{d\!}
\DeclareMathOperator{\tr}{tr}
\newtheorem{theorem}{Theorem}[section]
      \theoremstyle{plain}
      \newtheorem{assumption}{Assumption}[section]
\newlist{enumerateA}{enumerate}{1}
\setlist[enumerateA]{label=(A\arabic*)}
\title{Deep learning based dictionary learning and tomographic image reconstruction}
\author{Jevgenija Rudzusika\vspace{0.5mm} \\
  Department of Mathematics \\ 
  KTH Royal Institute of Technology\\
  \href{mailto:jevaks@kth.se}{\texttt{jevaks@kth.se}} 
  \And Thomas Koehler \vspace{0.5mm} \\
  Philips Research \\
  \href{mailto:thomas.koehler@philips.com}{\texttt{thomas.koehler@philips.com}}
  \And Ozan \"Oktem \vspace{0.5mm} \\
  Department of Mathematics \\ 
  KTH Royal Institute of Technology\\
  \href{mailto:ozan@kth.se}{\texttt{ozan@kth.se}}}
\date{}
\begin{document}
\maketitle
\begin{abstract}

This work presents an approach for image reconstruction in clinical low-dose tomography that combines principles from sparse signal processing with ideas from deep learning.  First, we describe sparse signal representation in terms of dictionaries from a statistical perspective and interpret dictionary learning as a process of aligning distribution that arises from a generative model with empirical distribution of true signals. As a result we can see that sparse coding with learned dictionaries resembles a specific variational autoencoder, where the decoder is a linear function and the encoder is a sparse coding algorithm. Next, we show that dictionary learning can also benefit from computational advancements introduced in the context of deep learning, such as parallelism and as stochastic optimization.
Finally, we show that regularization by dictionaries achieves competitive performance in \ac{CT} reconstruction comparing to state-of-the-art model based and data driven approaches.
\end{abstract}

\section{Introduction}
This work presents an approach for image reconstruction in clinical low-dose tomography that combines principles from sparse signal processing with ideas from deep learning. 
Before describing the method, we begin with providing some background in this section that serves as a motivation for our work.  

\subsection{X-ray tomographic imaging in medicine}
In \ac{CT} an x-ray tube and a detector rotate around a patient, acquiring x-ray transmission measurements from multiple directions.
The aim is to computationally recover an image showing the interior anatomy of the patient. 
This is an inherently unstable procedure, so a key issue lies in adequately addressing this instability.

\Ac{CT} is nowadays one of the most frequently used imaging modalities and approximately 100 million \ac{CT} scans are being performed annually in the United States alone.
The repeated radiation from multiple directions means exposing the patient to ionising x-rays. 
Rising concerns about radiation dose in clinical \ac{CT} imaging lead to increasing interest in low-dose protocols \cite{Brenner:2007aa,Tack:2007aa,Sarma:2012aa}.
One approach to lower the dose is to  collect data along fewer directions (sparse-view \ac{CT}). Alternatively, one can keep full sampling of data and instead reduce the intensity of emitted x-ray photons, e.g., by lowering the tube current and/or voltage in the scanner, which in turn results in data that is significantly more noisy.
This second approach is not only more practical, it is also more efficient, i.e., one can obtain better image quality from data with the same relative dose reduction \cite{Kalra:2004aa,McCollough:2009aa}.
This is true at least for the standard baseline methods. Therefore, all of our low-dose \ac{CT} data refers to data from full angular sampling, but with high noise. 

The significance of how the aforementioned instability is addressed during reconstruction increases with the noise level in data. 
In particular, it is evident that the computationally efficient reconstruction methods originally developed for image reconstruction from less noisy normal-dose \ac{CT} data are inadequate for low-dose \ac{CT}. 
The resulting images are degraded by noise and artefacts, which in turn renders them sub-optimal for diagnostic interpretation.
Addressing this has been an active area for research as briefly outlined in  \cref{sec:LowDoseCT}.

\subsection{Inverse problems and regularization}
\Ac{CT} imaging is an example of an inverse problem. 
The latter refers to problems where the goal is to recover a hidden signal from measured data that represent noisy indirect observations of the signal. 
Such problems arise in many areas of science and engineering.

Many inverse problems are large-scale in the sense that data and/or signal reside in high-dimensional spaces even after clever discretization.
As an example, signals and data in 2D/3D tomographic imaging are represented by high dimensional arrays.
Solution methods must therefore have computationally feasible implementations, which is especially important in time critical applications.

A key challenge in solving inverse problems is to address \emph{instability (ill-posedness)}. 
This refers to a situation where merely maximising the fit against measured data is an unstable procedure, i.e., small errors in data result in large perturbations to the resulting signal.
Solving an ill-posed inverse problem therefore requires specific attention to handle the intrinsic instability.
\emph{Regularization} refers to mathematical theory and algorithms that introduce stability by balancing the need to fit data against having a reconstruction that is consistent with known information about the unknown signal one seeks to recover (prior model).
The data misfit can be quantified by the \emph{data (log) likelihood}, which consists of a \emph{forward operator} (models how a signal gives rise to data in absence of noise) and a \emph{noise model} (encodes statistical properties of the observation errors). 
Both these constitute a simulator and they are typically derived from first principles by carefully considering the physics that governs the formation of data.  
Prior models are, in contrast to the data likelihood, not derived from the physics of data acquisition.
Next, the impact from a specific choice of prior becomes increasingly notable as the noise level in data increases.
Hence, a major topic during the last two decades in inverse problems research has been to devise appropriate prior models. Early approaches had smoothing priors for recovering low-frequency components of the signal. These were followed by more intricate regularity priors, like more domain adapted sparsity promoting priors that are either handcrafted or learned from example data.

Overall, it is still challenging to suggest a reconstruction method for low-dose \ac{CT} that is \emph{computational feasible}, yet is based on a  \emph{prior model} that provides a clinically notable improvement in image quality.

\subsection{Priors given by generative models}

A common approach for assembling a domain adapted prior model is to start from a \emph{generative model} that is defined by a \emph{synthesis operator} $\SynthesisOp \colon \DictCoeffSet \to \RecSpace$, which generates a signal in $\RecSpace$ from a representation in \emph{latent space} $\DictCoeffSet$. 
This representation is often not unique, therefore a criterion for choosing most suitable representation is introduced. 
This criterion can usually be expressed as minimising a (regularization) functional $\RegFunc \colon \DictCoeffSet \to \Real$. The generative model and the aforementioned regularization functional define a prior model, which can be used to address the inherit instability in solving an ill-posed inverse problem as outlined in~\cref{sec:ct_rec}.

\paragraph{Dictionaries and sparsity}
A sparsity prior  assumes that signals from some vector space $\RecSpace$ can be described as a \emph{linear combinations} of a \emph{few} signal components (atoms) from a pre-specified \emph{dictionary}. 

Stated formally, a \emph{dictionary} is some countable subset $\Dict = \{ \dictatom_i \}_i \subset \RecSpace$ whose elements $\dictatom_i$ are called \emph{(dictionary) atoms}.
The synthesis operator defining the generative model assembles signals in $\RecSpace$ by taking linear combinations of atoms:
\begin{equation}\label{eq:SynthesisOp}
  \SynthesisOp_{\Dict}(\dictcoeff) =
     \sum_i \dictcoeff_i \dictatom_i
  \quad\text{for $\dictcoeff_i \in \Real$ and $\dictatom_i \in \Dict \subset \RecSpace$.}
\end{equation}
The latent variables $\dictcoeff_i \in \Real$ are called \emph{dictionary coefficients}. 

A sparsity prior implies that only few non-zero coefficients are sufficient to represent a signal.
Unfortunately, the generative model in \cref{eq:SynthesisOp} in combination with a sparsity prior would require too many dictionary atoms to represent large and complex signals like images.
Therefore, it is typically applied to smaller image patches and representation of the full image is obtained by averaging representations of overlapping patches. An alternative is to consider convolutional dictionaries:
\begin{equation}\label{eq:SynthesisOpConv}
  \SynthesisOp_{\Dict}(\dictcoeff) =
     \sum_i \dictcoeff_i \ast \dictatom_i
  \quad\text{for $\dictcoeff_i \in \RecSpace$ and $\dictatom_i \in \Dict \subset \RecSpace$.}
\end{equation}
The dictionary atoms $\dictatom_i$ represent small local features, i.e.\ have a relatively small support compared to signals in $\RecSpace$.
They synthesise a signal by convolving against (local) signal maps, so dictionary coefficients $\dictcoeff_i$ are elements in $\RecSpace$ (coefficient maps) instead of simply real numbers.

The problem of computing sparse dictionary coefficients for a given signal $\signal$ is called sparse coding. It is natural to define this problem as finding coefficients $\estim{\dictcoeff}$ that minimize the objective:
\begin{equation}\label{eq:SparseCoding0}
    \displaystyle{\min_{\dictcoeff \in \DictCoeffSet}} 
    \bigl\| \SynthesisOp_{\Dict}(\dictcoeff) - \signal \bigr\|^2_{\RecSpace} + \regparamrec  \|\dictcoeff \|_0.
\end{equation}
The $l_0$ pseudo-norm in $\| \dictcoeff \|_0$ simply counts the number of non-zero terms in the sequence $\dictcoeff$. Since each non-zero term corresponds to an atom in $\Dict$, minimizing the objective means we look for a sparse representation (the one with few number of atoms), which sufficiently well approximates the original signal. 
In particular, $\regparamrec \geq 0$ is a weighting factor between sparsity and how well the coding matches the signal. A small value means the coding has to be close to the signal at the expense of sparsity, whereas increasing $\regparamrec$ means prioritising sparsity over the need to match the signal.

Unfortunately, the presence of an $l_0$-pseudo norm in \cref{eq:SparseCoding0} makes the optimization problem hard to solve. Indeed, it is known that this problem is NP-hard \cite{AmaldiKann1998,Tillmann:2015aa}. This difficulty is addressed by using approximation algorithms. There are generally two types of approximation techniques for this purpose. 
Greedy algorithms, like orthogonal matching pursuit \cite{tropp2007signal} and iterative hard thresholding \cite{blumensath2008iterative}, address the problem \cref{eq:SparseCoding0} directly, while methods based on convex relaxation replace the $l_0$-pseudo norm with an $l_1$-norm \cite{candes2006robust}:
\begin{equation}\label{eq:SparseCoding1}
    \min_{\dictcoeff \in \DictCoeffSet}
    \bigl\| \SynthesisOp_{\Dict}(\dictcoeff) - \signal \bigr\|^2_{\RecSpace} + \regparamrec  \|\dictcoeff \|_1.
\end{equation}
In general, a solution to \cref{eq:SparseCoding0} yields representations that can be more sparse than the representation obtained from solving \cref{eq:SparseCoding1}.
However, if $\signal$ admits a sufficiently sparse representation in $\Dict$ and $\Dict$ satisfies the restricted isometry property, then a solution to \cref{eq:SparseCoding1} is with high probability also a solution to \cref{eq:SparseCoding0} \cite{Donoho2006,CandesRombergTao2006,CandesTao2006}.

\paragraph{Dictionary learning}

A good sparsifying dictionary will generate sparse codes of signals that preserve important features, whereas noise and artefacts are preferably suppressed (or even lost).
The choice of dictionary is therefore an essential component, and in general, this can be done using one of two ways: 
\begin{inparaenum}[(a)]
\item building a dictionary based on a mathematical model of signals in $\RecSpace$, or 
\item learning a dictionary to perform best on a training set \cite{Rubinstein:2010aa}.
\end{inparaenum}
Dictionary learning focuses on the latter, i.e., on the task of learning an appropriate sparse representation from example data.
This is an inverse problem whose solution is a (trained) dictionary, which in turn defines a prior model on $\RecSpace$.

Let $\signal_1, \ldots, \signal_n \in \RecSpace$ be the example signals and $\DictClass$ is some fixed family of dictionaries (countable subsets of $\RecSpace$).
A common approach is to perform dictionary learning jointly with sparse coding, e.g., by minimizing the following objective with respect to both, dictionaries $\Dict \in \DictClass$ and coefficients $\dictcoeff_i \in \DictCoeffSet$:
\begin{equation}\label{eq:typical}
\begin{split}
    \min_{\Dict,\dictcoeff_i} &
    \sum_{i=1}^n \bigl\|
      \SynthesisOp_{\Dict}(\dictcoeff_i) - \signal_i 
     \bigr\|_{\RecSpace}^2 
     + \regparamrec \|\dictcoeff_i\|_1, \\
     \text{s.t. } &\|\dictatom_j\| = 1 \quad \text{for all } \dictatom_j \in \Dict.  
\end{split}
\end{equation}
The constraint on the norm of dictionary atoms $\|\dictatom_j\| = 1$ is included to circumvent the fact that coefficients $\dictcoeff_i$ can be reduced simply by up-scaling the corresponding dictionaries.

In \cref{sec:dl_theory} we present a statistical formulation of dictionary representation that provides a mathematical interpretation of the dictionary learning scheme in \cref{eq:typical}. 

\paragraph{Dictionary based \ac{CT} reconstruction}
Our goal here is to solve the inverse problem arising in  \ac{CT} imaging (\emph{image reconstruction}) assuming the true (unknown) image is generated by the aforementioned generative model.
This assumption may serve as a regularization, which is in particular the case when the generative model is not capable of generating undesirable noise and artefacts.

These two inverse problems, i.e., defining the generative model and image reconstruction, can be solved sequentially or jointly. The former means that dictionary learning is performed without considering the \ac{CT} image reconstruction step that follows next. 

In this work, we try to benefit from computational advancements of deep learning while staying in a highly interpretative framework of representation through dictionaries.
In this respect our work is strongly related to \cite{tolooshams2020deep}. 
However, our main focus is on dictionary learning as a regularization method in inverse problems, specifically in \ac{CT}.

\subsection{Outline of paper}

In the next section (\cref{sec:survey}) we present a survey of related works on dictionary learning and reconstruction in computerized tomography.
\Cref{sec:dl_theory} presents dictionary learning from a statistical perspective as a way to approximate an empirical distribution of a natural signal. 
Secondly, \cref{ssec:dl_implementation} describes a practical implementation of the learning procedure.
This is based on the convolutional generative model in \cref{eq:SynthesisOpConv} and the sparse coding in \cref{eq:SparseCoding1} that defines a prior model parametrised by a learned dictionary.
Next, in \cref{sec:dl_regularization} we describe how the learned dictionary model can be used for regularization in inverse problems. In particular, \cref{sec:ct_rec} describes  our reconstruction experiments in computerized tomography. 
\Cref{sec:conclusion} concludes the paper.

\section{Survey of related work}
\label{sec:survey}
\subsection{Image reconstruction in clinical low-dose \ac{CT}}\label{sec:LowDoseCT}
Image reconstruction for clinical \ac{CT} has traditionally relied on the \ac{FBP} method and variants thereof.
These analytical methods were first in introduced in late 1960s in astronomy \cite{Bracewell:1967aa} and later adopted by the medical imaging community in the early 1970s \cite{Shepp:1974aa}.
They use principles from Fourier analysis and sampling theory to recover the part of the image that is band-limited whereas high frequency components, like noise, are filtered out.
The mathematical foundation of \ac{FBP} was developed in the late 1970s \cite{Natterer:1980aa} and it has since then continuously evolved to account for increasingly complex acquisition geometries, like those that arise in 3D clinical \ac{CT} \cite{Feldkamp:1984aa,Faber:1995aa,Katsevich:2003aa,Noo:2003aa,Kohler:2006aa,Zhu:2004aa,Katsevich:2006aa,Yu:2006aa}.
The compromise that \ac{FBP} type of methods make in balancing image quality against reconstruction speed is still difficult to outperform in the context of clinical normal dose \ac{CT} imaging \cite{Pan:2009aa}.

\Ac{FBP} type of methods do not handle the noise statistics of measured data optimally. They regularise by recovering the band-limited part of the image, an approach that is not sufficient to suppress noise and artefacts that degrade image quality in low-dose \ac{CT}. 
Hence, \ac{FBP} type of methods render images in low-dose \ac{CT} that are sub-optimal for diagnostic interpretation.
The need to address this issue has catalysed the development of new iterative reconstruction algorithms in both academia and industry. 
Only a fraction of the methods developed in academia for low-dose \ac{CT} have made it into the clinic practice, and this almost always as part of collaboration with vendors of \ac{CT} scanners as surveyed next.

The first commercially available iterative reconstruction algorithms for replacing \ac{FBP} appeared in 2009 with the launch of IRIS (Siemens Healthineers) and ASiR (GE~Healthcare).
Since then, within a few years, all major \ac{CT} vendors introduced iterative reconstruction algorithms for clinical routine. 
Examples are AIDR (Canon Medical Systems, 2010) and AIDR3D (Canon Medical Systems, 2012), iDose${}^{4}$ (Philips Healthcare, 2012), SAFIRE (Siemens Healthineers, 2011) and ADMIRE (Siemens Healthineers, 2014).
These methods (often referred to as hybrid techniques or statistical iterative reconstruction) define an iterative scheme that combines denoising data and/or image with \ac{FBP} to map data into image space. 
Variational models (also called model-based iterative reconstruction) were introduced somewhat later. 
These come with rigorous mathematical foundations and in addition to noise and photon statistics, one can also model object, scanner geometry and detector response.
These are better at reducing noise and artefacts than hybrid techniques, but they may also alter image texture more.
Examples are VEO (GE~Healthcare, 2011),  ASiR-V (GE~Healthcare, 2015), IMR (Philips Healthcare, 2015), and FIRST (Canon Medical Systems, 2016).
See \cite{Geyer:2015aa,Andersen:2018aa,Mileto:2019aa,Willemink:2019aa} for a review of these methods in low-dose setting.

The final line of development is the recent commercial usage of deep learning based approaches for denoising/image reconstruction, like TrueFidelity (GE~Healthcare, 2018) and AiCE (Canon Medical Systems, 2018).

\subsection{Dictionary learning}
Traditional dictionary learning typically uses a variational model with a sparsity promoting regulariser, as in  \cref{eq:typical}, to recover the dictionary and corresponding dictionary coefficients for signals in a given training data set. 

For a long time K-SVD algorithm \cite{aharon2006k} was among the best approaches for dictionary learning. 
This method updates dictionary atoms one by one, while using all the available training data. 
Such an algorithm is therefore not suitable for large training data-sets that are available nowadays. 
Convex relaxation is another, theoretically appealing approach for learning convolutional dictionaries.
It was proposed in \cite{chambolle2020convex} and the advantage is that a unique solution to the relaxed problem can be determined. Unfortunately, it comes at a cost of greatly increasing the number of optimization variables, so such an approach is suitable only for learning a few relatively smalls atoms. 

\paragraph{Statistical formulation}
An alternative viewpoint is to phrase dictionary learning as density estimation. Dictionaries are here used to construct an estimate from training data of the unobservable probability density of signals.

Such an approach taken in \cref{eq:LearningProb}, which uses variational inference to minimize the \ac{KL} divergence between the empirical distribution of signals in the training data and the distribution modeled by the dictionaries.
As shown in \cref{eq:marginal}, this naturally leads to a formulation where the joint probability density of signals and  dictionary coefficients,   $\pdf_{\stsignal,\stdictcoeff}(\signal,\dictcoeff)$, is marginalized over the dictionary coefficients.
This is computationally intractable for imaging applications. 
One option is to approximate the $\DictCoeffSet$-integral in \cref{eq:marginal} by the value of the integrand at the mode \cite{olshausen1997sparse,tolooshams2020deep}, which in turn leads to the common formulation for dictionary learning given in \cref{eq:DL_inpractice}.
However, the above cited works do not characterize the resulting  approximation error.

Another option is to approximate the integrand in \cref{eq:marginal} by un-normalized Gaussian density around the mode \cite{lewicki1999probabilistic,lewicki2000learning}. 
This is equivalent to approximating the posterior density  $\pdf_{\stdictcoeff \mid \stsignal}(\dictcoeff \mid \signal)$ by a Gaussian distribution and it leads to an optimization problem different than the one in \cref{eq:DL_inpractice}. 
Yet another dictionary learning algorithm is derived in \cite{girolami2001variational}. Here, the authors start out from expressing the Laplace prior $\pdf_{\stdictcoeff}(\dictcoeff)$ in \cref{eq:marginal} as a supremum of Gaussian densities. Then, they optimize a lower bound of the log-likelihood. 
Unfortunately, the derivation of the algorithms in these cited papers involve inversion of prohibitively large matrices that require further approximations for their proper handling. 

Our aim with \cref{sec:dl_theory} is not primarily to derive a new update rule for dictionary learning as in \cite{lewicki1999probabilistic,lewicki2000learning,girolami2001variational}. It is rather to provide a statistical interpretation of  \cref{eq:DL_inpractice} that, unlike \cite{olshausen1997sparse,tolooshams2020deep}, also includes characterizing the approximation error.

\paragraph{Connection to deep learning}
There are many attempts at connecting sparse signal processing to deep learning. 
A popular line of investigation focuses on the connection between sparse representation through convolutional dictionaries and convolutional neural networks.

More precisely, the authors of \cite{papyan2017convolutional} show that a trained convolutional neural network corresponds to an approximate algorithm for the sparse coding in a sparse multi-layer convolutional dictionary model.
This model exploits a structure given by an ordered sequence hierarchically arranged dictionaries.
Starting from the signal, atoms in one dictionary are sparsely represented by atoms in the next dictionary. 
Thus, each subsequent dictionary layer represents features with higher level of abstraction and \cite{sulam2018multilayer,sulam2019multi,mahdizadehaghdam2019deep} proposes training strategies for such models. 
The obtained sparse representations with respect to the last dictionary layer are used as input for image classification.

In \cite{mahdizadehaghdam2019deep}, it is empirically demonstrated that classification results based on such a trained sparse convolutional dictionary model are less susceptible to adversarial image perturbations than corresponding outcomes from a trained convolutional neural network.

\subsection{Dictionary based \ac{CT} reconstruction}

Several authors attempted regularization by dictionary learning in \ac{CT}. Early work used dictionary representation of image patches as a regularizing component in statistical iterative reconstruction (SIR) \cite{xu2012low}. The authors evaluated a dictionary learned from training images (GDSIR) and an adaptive dictionary learned simultaneously with reconstruction (ADSIR) and found that both are similarly effective. 
GDSIR and ADSIR set the baseline for many works to follow. In this work we use a similar problem formulation. However, the way we address practical aspects of dictionary learning and sparse coding is different.

Later, in \cite{chen2014artifact} one observed that artifacts in low-dose \ac{CT} images might give rise to sufficiently large coefficients in dictionary representation and those high coefficients won't be suppressed by thresholding operation, unless a very high threshold is set and an image becomes over-smoothed. They use carefully selected samples to train dictionaries that represent artifacts as well as tissue features 
and then set coefficients corresponding to artifacts to 0. They evaluated their method on real data and report benefits of their approach in qualitative assessment performed by radiologists. Nevertheless, the approach falls into category of post-processing reconstructions obtained by \ac{FBP} \cite{KakSlaney,WubbelingNatterer}, while we perform reconstruction and de-noising jointly by solving a variational problem.

Other attempts to improve the performance of dictionary learning include: using  $l_1$ for misfit between image and its dictionary representation \cite{zhang2016low}, smoothing intermediate image updates to remove artifacts \cite{komolafe2020smoothed},
using dictionary learning in combination with \ac{TV} \cite{zhao2019low}
and clustering patches and learning dictionaries for each class separately \cite{kamoshita2019low}.

Following the success of convolutional filters in deep learning, regularization by learned convolutional dictionaries has been applied to \ac{CT}. In \cite{bao2019convolutional}, one learns 32 convolutional dictionaries of size $10 \times 10$, while using total variation type penalty on coefficient maps. 

Furthermore, most of the previously published work use relatively small dictionaries. A common setup is to learn 256 dictionaries of size $8 \times 8$ with 5 to 10 non-zero coefficients as it was done in \cite{xu2012low}. One exception is the work in \cite{hu2016image}, where authors experimented with different patch sizes and found that the optimal patch size is around $16\times16$, with larger patches leading to slightly worse performance. 
In this work we also aim to learn larger dictionaries that can capture more information and potentially provide stronger regularization.

\subsection{Deep Learning methods for \ac{CT} image reconstruction}
Most recently, the idea of non-linear synthesis operators was introduced.
In \cite{obmann2020deep}, two networks, one for encoding and one for decoding, were trained, forcing encoders output to be sparse. Then, the decoder was used as a non-linear synthesis operator to solve regularized optimization problem. 
Similarly, in \cite{baguer2020computed} convolutional neural network was used to generate images. Network parameters were trained in unsupervised manner, during the reconstruction itself. This approach called Deep Image Prior \cite{ulyanov2018deep} is based on the idea that convolutional neural networks learn to represent low frequency components first and therefore, early stopping can be used to avoid noise. In contrast, we stick to the more traditional approach having linear synthesis operator. 

Despite the lack of inseparability and theoretical guarantees, supervised deep learning methods have been shown to produce state of the art results in \ac{CT} reconstruction. Although all of these methods share the philosophy of learning from data, there is a lot of diversity in the proposed model architectures and training procedures. These methods include Adversarial Regularizer \cite{lunz2018adversarial}, U-Net post-processing \cite{jin2017deep}, and networks for reconstruction that are trained in a supervised manner with architectures derived from unrolling iterations of a suitable optimization scheme: LEARN \cite{chen2018learn}, Learned Primal-Dual \cite{adler2018learned}, and Total Deep Variation \cite{kobler2020total}. We refer to \cite{ravishankar2019image} for a survey of the recent methods.

\section{Statistical interpretation of dictionary learning} \label{sec:dl_theory}

To formulate dictionary learning in a statistical setting, we start by considering a \emph{generative model} for signals in $\RecSpace$.
Let $\stsignal \sim \SignalPrior$ be a $\RecSpace$-valued random variable generating natural signal, e.g., 2D/3D images.
The generative model parametrised by dictionary $\Dict \subset \RecSpace$ is defined as a signal generated by sampling from the $\RecSpace$-valued random variable 
\begin{equation}\label{eq:GenModel} 
    \SynthesisOp_{\Dict}(\stdictcoeff) + \stsignalnoise
    \quad\text{given a synthesis operator $\SynthesisOp_{\Dict} \colon \DictCoeffSet \to \RecSpace$.}
\end{equation}
In the above, $\stdictcoeff \sim \PriorDictCoeff$ is a $\DictCoeffSet$-valued random variable generating dictionary coefficients and $\stsignalnoise \sim \PriorNoise$ is a $\RecSpace$-valued random variable representing random variations that arise from limitations in the model capacity of the generative model given by the dictionary.

A well-chosen generative model needs to have sufficiently high model capacity in order to represent important features of the true signal, like edges and texture in an image at various scales.  
On the other hand, a too large model capacity will also represent noise and other unwanted features.

\subsection{Dictionary learning as evidence maximisation}\label{sec:EvidenceMax}

\emph{Dictionary learning} is defined here as the task of choosing dictionary $\Dict \subset \RecSpace$ so that the statistical distribution of generated signals is as close as possible to the true (unknown) distribution of the signal.
More precisely, let $\ModelDistr_{\Dict}$ denote the distribution of signals generated as in \cref{eq:GenModel}, i.e.,  $\SynthesisOp_{\Dict}(\stdictcoeff) + \stsignalnoise \sim \ModelDistr_{\Dict}$.
We then seek the dictionary that maximises the match to the true distribution $\SignalPrior$, e.g., by solving 
\begin{equation}\label{eq:LearningProb}
  \estim{\Dict} 
    \in \argmin_{\Dict} \PDist\bigl( \SignalPrior,  \ModelDistr_{\Dict}\bigr).
\end{equation}
Here, $\PDist$
quantifies similarity between probability distributions on $\RecSpace$.
The above essentially amounts to choosing the dictionary $\Dict$ so that $\SynthesisOp_{\Dict}(\stdictcoeff) + \stsignalnoise$ is as close as possible to $\stsignal$ as random variables in $\RecSpace$.
The optimisation in \cref{eq:LearningProb} can be non-convex, which is why we define $\estim{\Dict}$ with `$\in$' instead of equality.

An issue with the formulation in \cref{eq:LearningProb} is that it assumes one has access to the true (unknown) signal distribution $\SignalPrior$.
Often, one has access to i.i.d.\ samples $\TData := \{ \signal_1, \ldots, \signal_N \} \subset \RecSpace$ generated by $\stsignal \sim \SignalPrior$.
One can replace the unknown distribution $\SignalPrior$ in \cref{eq:LearningProb} with $\estim{\PDens}_{\TData} \in \PClass{\RecSpace}$, which is the empirical measure given by the aforementioned training data, so \cref{eq:LearningProb} is replaced with 
\begin{equation}\label{eq:LearningProb2}
  \estim{\Dict} 
    \in \argmin_{\Dict} \PDist\bigl( \estim{\PDens}_{\TData}, \ModelDistr_{\Dict} \bigr)
  \quad\text{where}\quad
  \estim{\PDens}_{\TData} 
    := \frac{1}{N}\sum_{i=1}^{N} \delta_{\signal_i}.
\end{equation}
Here $\delta_{\signal_i}$ is the probability measure on $\RecSpace$ that has a unit point mass at $\signal_i \in \RecSpace$.

The next step is to specify $\PDist \colon \PClass{\RecSpace} \times \PClass{\RecSpace} \to \Real$ in \cref{eq:LearningProb} (and \cref{eq:LearningProb2}).
The \ac{KL} divergence is a common choice, so \cref{eq:LearningProb2} reads as 
\begin{equation}\label{eq:LearningProbKL}
  \estim{\Dict} 
    \in \argmin_{\Dict} \KL\bigl( 
      \estim{\PDens}_{\TData}
      \mid 
      \ModelDistr_{\Dict}  
    \bigr).
\end{equation}
If $\pdf_{\Dict} \colon \RecSpace \to \Real_+$ is the density for $\ModelDistr_{\Dict}$, then the \ac{KL} divergence is expressible as  
\begin{align*} 
  \KL\bigl( 
      \estim{\PDens}_{\TData} 
      \mid 
      \ModelDistr_{\Dict} 
    \bigr)
    &:= \frac{1}{N}\sum_{i=1}^N \Bigl\{ \log\Bigl(\frac{1}{N}\Bigr) - \log \pdf_{\Dict}(\signal_i) \Bigr\}
\\    
    &= \log\Bigl(\frac{1}{N}\Bigr) - \frac{1}{N}\sum_{i=1}^{N} \log \pdf_{\Dict}(\signal_i).
\end{align*}
Hence, the dictionary learning problem in \cref{eq:LearningProbKL} can be re-phrased as evidence maximisation:
\begin{equation}\label{eq:LearningProbKL2} 
   \estim{\Dict} \in \argmax_{\Dict} 
   \sum_{i=1}^{N} \log \pdf_{\Dict}(x_i).
\end{equation}

\subsection{Sparse dictionary representations}

To proceed, we introduce assumptions that are typical for sparse dictionary representations \cite{lewicki2000learning,lewicki1999probabilistic,girolami2001variational,tolooshams2020deep}. 

\begin{assumption}[Sparse dictionary representation] 
Consider the generative model in \cref{eq:GenModel} where:
\begin{enumerateA}
\item \label{as:a1}
  $\RecSpace = \Real^\signaldim$, $\DictCoeffSet = \Real^\dictcoeffdim$ with $\signaldim \ll \dictcoeffdim$. 
\item \label{as:a2} 
  Synthesis operator: $\SynthesisOp_{\Dict} \colon \Real^\dictcoeffdim \to \Real^\signaldim$ is linear, i.e.  
  \begin{equation}\label{eq:LinearSynthesisOp} 
    \SynthesisOp_{\Dict}(\dictcoeff) := 
    \DictMat \dictcoeff
    \quad\text{where $\DictMat \in \Real^{\signaldim \times \dictcoeffdim}$ }.
  \end{equation}
\item \label{as:a3} 
    Dictionary atoms have a fixed norm:
    \begin{equation}
        \|\dictatom\| = 1 \quad \text{for all } \dictatom \in \Dict.
    \end{equation}
\item \label{as:a4} 
  Dictionary coefficients are Laplace distributed, i.e., $\stdictcoeff \sim \PriorDictCoeff$ has corresponding density
  \begin{equation}\label{eq:LaplaceDensity}
    \pdf_{\stdictcoeff}(\dictcoeff) 
    = \frac{1}{(2 \dictcoeffpar)^\dictcoeffdim} 
    \exp\Bigl(-\frac{\Vert \dictcoeff \Vert_1}{\dictcoeffpar} \Bigr).
  \end{equation}
\item \label{as:a5} 
  $\stsignalnoise$ is independent of $\stdictcoeff$ and has Gaussian distribution with density 
  \begin{equation}\label{eq:GaussianDensity}
      \pdf_{\stsignalnoise}(\signal) = \frac{1}{(\sqrt{2\pi} \signalnoisepar)^n} 
      \exp\Bigl( -\frac{1}{2\signalnoisepar^2}\Vert \signal \Vert_2^2 \Bigr).
  \end{equation}
\end{enumerateA}
\end{assumption}
Assumption \ref{as:a1} merely states that the signal and latent spaces are both finite dimensional and the dictionary is over-complete, i.e., there are more dictionary atoms ($=\dictcoeffdim$) than the dimension of the signal ($=\signaldim$). Linearity assumption in \ref{as:a2} implies that synthesis operator $\SynthesisOp_{\Dict}$ can be represented by a matrix. However, it does not necessarily mean that each dictionary atom in $\Dict$ is represented by a column vector of $\DictMat$. As an example, we will use a convolutional synthesis operator where dictionary atoms (in $\Dict$) represent convolutional kernels and for each kernel there are columns in $\DictMat$ that correspond to different shifts of that kernel.
In addition, we fix the norm of dictionary atoms in assumption \ref{as:a3}. Without this assumption, one would have to take into account that distribution of the dictionary coefficients depends on the norm of corresponding dictionaries. This would make the model, in particular the assumption \ref{as:a4}, more complicated.  
The assumption \ref{as:a4} ensures that maximizing the posterior density $\dictcoeff \mapsto \pdf_{\stdictcoeff \mid \stsignal}(\dictcoeff \mid \signal)$ 
yields sparsity of dictionary coefficients.
Finally, in \ref{as:a5} we use Gaussian distribution to account for small inaccuracies in the generative model $\SynthesisOp_{\Dict}(\stdictcoeff)$. 

We proceed by making use of the above assumptions for computing the objective (log-evidence) in \cref{eq:LearningProbKL2}.
These assumptions yield an expression for the joint density for $(\stsignal,\stdictcoeff)$, so the idea is to express the desired density by marginalizing over dictionary coefficients $\stdictcoeff$:
\begin{equation}
\label{eq:marginal}
\begin{split}
    \sum_{i=1}^{N} \log \pdf_{\Dict}(x_i) 
    &= \sum_{i=1}^{N} \log \int_{\DictCoeffSet} \pdf_{\stsignal,\stdictcoeff}(\signal_i,\dictcoeff_i)\dint \dictcoeff_i 
    \\
    &= \sum_{i=1}^{N} \log \int_{\DictCoeffSet} 
      \pdf_{\stsignal \mid \stdictcoeff}(\signal_i \mid \dictcoeff_i)  \pdf_{\stdictcoeff}(\dictcoeff_i)  
    \dint \dictcoeff_i \\
    &= \sum_{i=1}^{N} \log \int_{\DictCoeffSet}
    \pdf_{\stsignalnoise}(\signal_i-\SynthesisOp_{\Dict}(\dictcoeff_i))  \pdf_{\stdictcoeff}(\dictcoeff_i)  
    \dint \dictcoeff_i.
\end{split}
\end{equation}
Note that all density functions in the above expression except for $\pdf_{\stdictcoeff}$ depend on dictionary $\Dict$, however we do not show this dependency to simplify the notation. Nevertheless, it is important to keep in mind that distributions $\pdf_{\stsignal,\stdictcoeff}$ and $\pdf_{\stsignal \mid \stdictcoeff}$ are not the true distributions, but the ones assumed by the model (since we are expressing the log evidence $ \log \pdf_{\Dict}(x_i)$).

The $\DictCoeffSet$-integrals above are computationally demanding.   
A standard machine learning approach to avoid this problem by decomposing the log evidence into two terms:
\begin{equation}\label{eq:ELBOsplit}
\begin{split}
    \log \pdf_{\Dict}(\signal) &= \KL(\ApproxPdf \mid \pdf_{\stdictcoeff \mid \stsignal})
    + \ELBO(\ApproxPdf)(\signal) \geq \ELBO(\ApproxPdf)(\signal)
\end{split}
\end{equation}
Note that $\ApproxPdf \colon \DictCoeffSet \to [0,1]$ above is \emph{any} probability density function on $\DictCoeffSet$ and $\ELBO(\ApproxPdf) \colon \RecSpace \to \Real$ is the  \ac{ELBO} that is defined as 
\begin{equation}\label{eq:ELBODef}
\begin{split}
    \ELBO(\ApproxPdf)(\signal) &= 
    \int_{\DictCoeffSet} \ApproxPdf(\dictcoeff) \log  \frac{\pdf_{\stsignal,\stdictcoeff}(\signal,\dictcoeff)}{\ApproxPdf(\dictcoeff)}  \dint \dictcoeff \\
    &   = \ExpectApproxPdf \bigl[ 
        \log\pdf_{\stsignal,\stdictcoeff}(\signal,\ApproxStCoef) \bigr]   
        - \ExpectApproxPdf\bigl[ \log  \ApproxPdf(\ApproxStCoef) \bigr],
\end{split}
\end{equation}
where $\ApproxStCoef$ is any $\DictCoeffSet$-valued random variable with a probability density function $\ApproxPdf$
and $\ExpectApproxPdf$ denotes the expectation w.r.t. this density.
Next, the inequality in  \cref{eq:ELBOsplit} holds for any $\ApproxPdf$ since the \ac{KL} divergence is always positive. 
Hence, instead of directly maximizing the log evidence $\log \pdf_{\Dict}(\signal)$ with respect to $\Dict$ as in \cref{eq:LearningProbKL2}, one can maximize the \ac{ELBO}. 
Furthermore, if $\ApproxPdf$ approximates $\pdf_{\stdictcoeff \mid \stsignal}$, the \ac{KL} divergence is small and therefore the lower bound is tight, see \cite{bishop2006pattern} for further details.

We consider $\ApproxPdf$ being a Laplace distribution concentrated around the mode of $\pdf_{\stdictcoeff \mid \stsignal}$. This choice is supported by the following intuition: The mode of the posterior $\dictcoeffmode$ is a sparse vector. Therefore, most elements of its elements are 0. If the representation is accurate (the variance of $\stsignalnoise$ is small), then the dictionary model is capable of explaining a large part of the variability of $\stsignal$ and $\pdf_{\stsignal \mid \stdictcoeff }(\signal \mid \dictcoeff) $ is larger than $ \pdf_{\stsignal}(\signal)$ close to the mode $\dictcoeffmode$. If the representation is unique, the opposite holds further away from the mode. Therefore,  along most of the dimensions the posterior
\begin{equation}
    \pdf_{\stdictcoeff \mid \stsignal} (\dictcoeff \mid \signal) = \frac{\pdf_{\stsignal \mid \stdictcoeff }(\signal \mid \dictcoeff)}{\pdf_{\stsignal}(\signal) } \pdf_{\stdictcoeff}(\dictcoeff)
\end{equation}
is even spikier than the prior $\pdf_{\stdictcoeff}(\dictcoeff)$. 

Furthermore, the following theorem shows that a slight relaxation of the lower bound will, given this approximation of the posterior $\pdf_{\stdictcoeff \mid \stsignal}$ by $\ApproxPdf$, result in the optimization problem \cref{eq:typical}.
Typical approaches for dictionary learning are based on solving this optimisation, so the theorem below offers a statistical interpretation of the dictionary learning procedure.
The theorem shows in particular that if the model is sparse, then solving \cref{eq:typical} essentially amounts to maximizing the \ac{ELBO} for a Laplace distributed posterior. Moreover, if this posterior is appropriate (\ac{KL} divergence with the true posterior is small), we are maximizing log-likelihood of image samples and hence minimizing the \ac{KL} divergence between empirical distribution of images and distribution generated by our model.
\begin{theorem}
Consider the generative model in \cref{eq:GenModel} and assume  \ref{as:a1}-\ref{as:a5} holds. 
Also, let $\ApproxPdf$ denote the density of a Laplace distributed $\DictCoeffSet$-valued random variable centered at the mode for the posterior, i.e.
\begin{equation}
    \ApproxPdf(\dictcoeff \mid \signal) = \frac{1}{(2\dictcoeffmodepar)^\dictcoeffdim} \exp\left( -\frac{\|\dictcoeff-\dictcoeffmode \|_1}{\dictcoeffmodepar} \right)
    \quad\text{for fixed  $\dictcoeffmodepar>0$,}
\end{equation}
with 
\begin{equation}
    \dictcoeffmode
    := \argmax_{\dictcoeff \in \DictCoeffSet}\, \log \pdf_{\stdictcoeff \mid \stsignal}(\signal,\dictcoeff) 
    = \argmax_{\dictcoeff \in \DictCoeffSet}\, \log \pdf_{\stsignal, \stdictcoeff}(\signal,\dictcoeff).
\end{equation}
Then 
\begin{equation}\label{eq:elbo_lb}
\begin{split}
    \ELBO(\ApproxPdf)(\signal) &= - \ExpectApproxPdf \bigl[ f(\signal,\ApproxStCoef) \bigr] + C(\signalnoisepar, \dictcoeffpar, \dictcoeffmodepar) \\
    &\geq  - f(\signal, \dictcoeffmode) - \dictcoeffdim\frac{(\dictcoeffmodepar)^2}{\signalnoisepar^2} - \dictcoeffdim\frac{\dictcoeffmodepar}{\dictcoeffpar}  +  C(\signalnoisepar, \dictcoeffpar, \dictcoeffmodepar)
\end{split}
\end{equation}
where   
\begin{equation}
\begin{split}
f(\signal,\dictcoeff) &:= \dfrac{1}{2\signalnoisepar^2}\|\DictMat \dictcoeff-\signal\|^2 + \dfrac{1}{\dictcoeffpar}\|\dictcoeff\|_1 
\\
C(\signalnoisepar, \dictcoeffpar, \dictcoeffmodepar) &:= - \frac{\signaldim}{2}\log 2\pi \signalnoisepar^2 + \dictcoeffdim \log \frac{\dictcoeffmodepar}{\dictcoeffpar}  + 1. 
\end{split}
\end{equation}
Finally, the gap between $\ELBO(\ApproxPdf)$ and its lower bound in \cref{eq:elbo_lb} is bounded by $\dfrac{\dictcoeffmodepar}{\dictcoeffpar}\|\dictcoeffmode\|_0$.
\end{theorem}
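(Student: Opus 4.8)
The plan is to evaluate $\ELBO(\ApproxPdf)(\signal)$ exactly from its definition \cref{eq:ELBODef} and then bound the single expectation that survives. First I would substitute the explicit densities. By \ref{as:a2} and \ref{as:a5} the conditional factorises as $\pdf_{\stsignal\mid\stdictcoeff}(\signal\mid\dictcoeff) = \pdf_{\stsignalnoise}(\signal - \DictMat\dictcoeff)$, so together with the Laplace prior \cref{eq:LaplaceDensity} the log-joint becomes
\begin{equation*}
  \log \pdf_{\stsignal,\stdictcoeff}(\signal,\dictcoeff) = -f(\signal,\dictcoeff) - \tfrac{\signaldim}{2}\log(2\pi\signalnoisepar^2) - \dictcoeffdim\log(2\dictcoeffpar),
\end{equation*}
where the Gaussian exponent and the $\ell_1$ exponent are exactly the two pieces of $f$. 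The entropy term $-\ExpectApproxPdf[\log\ApproxPdf(\ApproxStCoef)]$ is the differential entropy of a product of $\dictcoeffdim$ independent Laplace variables of scale $\dictcoeffmodepar$, a closed-form constant. Collecting everything independent of the integration variable into $C(\signalnoisepar,\dictcoeffpar,\dictcoeffmodepar)$ yields the asserted equality $\ELBO(\ApproxPdf)(\signal) = -\ExpectApproxPdf[f(\signal,\ApproxStCoef)] + C(\signalnoisepar,\dictcoeffpar,\dictcoeffmodepar)$.

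The core of the argument is computing $\ExpectApproxPdf[f(\signal,\ApproxStCoef)]$. I would write $\ApproxStCoef = \dictcoeffmode + \varepsilon$, where $\varepsilon$ has i.i.d.\ zero-mean Laplace entries of scale $\dictcoeffmodepar$ and hence componentwise variance $2(\dictcoeffmodepar)^2$. For the quadratic term, expanding $\|\DictMat\ApproxStCoef - \signal\|^2$ and using $\Expect[\varepsilon]=0$ annihilates the cross term, while $\Expect[\|\DictMat\varepsilon\|^2] = 2(\dictcoeffmodepar)^2\tr(\DictMat^\top\DictMat) = 2\dictcoeffdim(\dictcoeffmodepar)^2$, since by \ref{as:a3} every column of $\DictMat$ is a (possibly shifted) unit-norm atom so that $\tr(\DictMat^\top\DictMat)=\dictcoeffdim$; this produces the $\dictcoeffdim(\dictcoeffmodepar)^2/\signalnoisepar^2$ term. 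The $\ell_1$ term is the one genuinely non-routine computation: $\ExpectApproxPdf[\|\ApproxStCoef\|_1] = \sum_{j=1}^{\dictcoeffdim}\Expect[\,|\dictcoeffmode_j + \varepsilon_j|\,]$, and each summand is a one-dimensional integral of $|a+\varepsilon_j|$ against a Laplace density. Evaluating it gives the clean closed form $\Expect[\,|a+\varepsilon_j|\,] = |a| + \dictcoeffmodepar\, e^{-|a|/\dictcoeffmodepar}$, whence
\begin{equation*}
  \ExpectApproxPdf[\|\ApproxStCoef\|_1] = \|\dictcoeffmode\|_1 + \dictcoeffmodepar \sum_{j=1}^{\dictcoeffdim} e^{-|\dictcoeffmode_j|/\dictcoeffmodepar}.
\end{equation*}
Combining the two pieces gives the exact identity
\begin{equation*}
  \ExpectApproxPdf[f(\signal,\ApproxStCoef)] = f(\signal,\dictcoeffmode) + \dictcoeffdim\frac{(\dictcoeffmodepar)^2}{\signalnoisepar^2} + \frac{\dictcoeffmodepar}{\dictcoeffpar}\sum_{j=1}^{\dictcoeffdim} e^{-|\dictcoeffmode_j|/\dictcoeffmodepar}.
\end{equation*}

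The two remaining estimates are then immediate. Bounding each exponential by $e^{-|\dictcoeffmode_j|/\dictcoeffmodepar}\le 1$ gives $\ExpectApproxPdf[f(\signal,\ApproxStCoef)] \le f(\signal,\dictcoeffmode) + \dictcoeffdim(\dictcoeffmodepar)^2/\signalnoisepar^2 + \dictcoeffdim\dictcoeffmodepar/\dictcoeffpar$; negating and adding $C$ produces the lower bound in \cref{eq:elbo_lb}. For the gap, subtracting this lower bound from the exact value of $\ELBO(\ApproxPdf)(\signal)$ leaves exactly $\frac{\dictcoeffmodepar}{\dictcoeffpar}\sum_{j=1}^{\dictcoeffdim}\bigl(1 - e^{-|\dictcoeffmode_j|/\dictcoeffmodepar}\bigr)$. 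A coordinate with $\dictcoeffmode_j=0$ contributes $0$ to this sum and every other coordinate contributes at most $1$, so the sum is bounded by the number of nonzero entries $\|\dictcoeffmode\|_0$, which yields the claimed gap bound $\frac{\dictcoeffmodepar}{\dictcoeffpar}\|\dictcoeffmode\|_0$.

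The only real obstacle is the exact evaluation of $\Expect[\,|a+\varepsilon_j|\,]$ for a shifted Laplace variable; splitting the real line at the kink points $0$ and $-a$ and integrating by parts on each piece is elementary but must be done carefully so that the exponential correction term $\dictcoeffmodepar\, e^{-|a|/\dictcoeffmodepar}$ comes out exactly — it is precisely this term that drives both the lower bound and the sparsity-dependent gap. Everything else is bookkeeping and two one-line inequalities.
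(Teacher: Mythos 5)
Your proposal is correct and follows essentially the same route as the paper's proof: expand the log-joint to get $\ELBO(\ApproxPdf)(\signal) = -\ExpectApproxPdf[f(\signal,\ApproxStCoef)] + C$, center at the mode, use $\Expect[\Delta\stdictcoeff]=0$ and $\tr(\DictMat^{\top}\DictMat)=\dictcoeffdim$ for the quadratic term, and the folded-Laplace identity $\Expect[|a+\varepsilon|] = |a| + \dictcoeffmodepar e^{-|a|/\dictcoeffmodepar}$ for the $\ell_1$ term, then bound each exponential by $1$. The only cosmetic differences are that you derive the folded-Laplace expectation directly rather than citing it, and you spell out the gap argument (zero coordinates contribute nothing, nonzero ones at most $\dictcoeffmodepar/\dictcoeffpar$) that the paper leaves implicit.
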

\begin{proof}
First,  we express $\ELBO(\ApproxPdf)$ as defined in \cref{eq:ELBODef} by expressing its parts
\begin{equation}
\label{eq:ELBOpart1}
\begin{split}
    \ExpectApproxPdf\bigl[ 
  \log  \ApproxPdf(\ApproxStCoef \mid \signal) \bigr] 
  &= 
     - \dictcoeffdim \log 2 \dictcoeffmodepar - \frac{1}{ \dictcoeffmodepar} \ExpectApproxPdf \bigl[ 
       \|\ApproxStCoef - \dictcoeffmode\|_1  
     \bigr] \\
  &= - \dictcoeffdim \log 2\dictcoeffmodepar - 1
\end{split}
\end{equation}
and 
\begin{equation}
\label{eq:ELBOpart2}
\begin{split}
    \ExpectApproxPdf
    \bigl[ \log \pdf_{\stsignal,\stdictcoeff}(\signal,\ApproxStCoef) 
    \bigr]
    &= \ExpectApproxPdf\Bigl[ 
      \log \bigl( \pdf_{\stsignal \mid \stdictcoeff}(\signal \mid \ApproxStCoef) \pdf_{\stdictcoeff}(\ApproxStCoef) \bigr)
    \Bigr]\\
    &= - \frac{\signaldim}{2}\log 2\pi \signalnoisepar^2 - \dictcoeffdim\log2\dictcoeffpar - \ExpectApproxPdf\bigl[ f(\signal,\ApproxStCoef) \bigr]
\end{split}
\end{equation}
Set $\Delta \stdictcoeff := \ApproxStCoef - \dictcoeffmode$, then
\begin{equation}
\begin{split}
    f(\signal,\ApproxStCoef) 
    &= \frac{1}{2\signalnoisepar^2}\bigl\| \DictMat(\dictcoeffmode + \Delta \stdictcoeff ) - \signal \bigr\|^2 
    + \frac{1}{\dictcoeffpar}\|\dictcoeffmode + \Delta \stdictcoeff \|_1 \\
    &= \frac{1}{2\signalnoisepar^2}\|\DictMat \dictcoeffmode-\signal\|^2
    + \frac{1}{\signalnoisepar^2}(
        \DictMat \dictcoeffmode-\signal)^{\top}
        \DictMat \Delta \stdictcoeff \\
    &+ \frac{1}{2\signalnoisepar^2}\|\DictMat \Delta \stdictcoeff \|^2
    + \frac{1}{b}\|\dictcoeffmode + \Delta \stdictcoeff \|_1 
\end{split}
\end{equation}
Now, we make the following observations: 
\begin{itemize}
\item $\ExpectApproxPdf [\Delta \stdictcoeff] = \ExpectApproxPdf [\ApproxStCoef] - \dictcoeffmode = 0$,
\item $\ExpectApproxPdf\bigl[ \|\DictMat \Delta \stdictcoeff \|^2 \bigr] = \ExpectApproxPdf\bigl[ \Delta \stdictcoeff^{\top} \DictMat^{\top} \DictMat \Delta \stdictcoeff  \bigr] = 2(\dictcoeffmodepar)^2\tr(\DictMat^{\top} \DictMat) = 2(\dictcoeffmodepar)^2\dictcoeffdim$, 
\item $\|\ApproxStCoef\|_1$ is folded Laplace distribution and its expected value has been derived analytically in \cite{Liu2015AFL}:
    \begin{equation}
        \ExpectApproxPdf\bigl[ \|\ApproxStCoef\|_1 \bigr] = \|\dictcoeffmode\|_1 + \dictcoeffmodepar\sum_{i=1}^\dictcoeffdim \exp\left( \frac{-|\dictcoeffmode_i|}{\dictcoeffmodepar} \right).
    \end{equation}
\end{itemize}  
Then, we can conclude that
\begin{equation}
    \label{eq:Ef_lb}
\begin{split}
    \ExpectApproxPdf \bigl[ f(\signal,\ApproxStCoef) \bigr] &= f(\signal,\dictcoeffmode) + \dictcoeffdim\frac{(\dictcoeffmodepar)^2}{\signalnoisepar^2} + \frac{\dictcoeffmodepar}{\dictcoeffpar} \sum_{i=1}^\dictcoeffdim \exp\left( \frac{-|\dictcoeffmode_i|}{\dictcoeffmodepar} \right)\\
    &\leq f(\signal,\dictcoeffmode) + \dictcoeffdim \frac{(\dictcoeffmodepar)^2}{\signalnoisepar^2} +\dictcoeffdim \frac{\dictcoeffmodepar}{\dictcoeffpar}
\end{split}
\end{equation}
The desired lower bound on $\ELBO(\ApproxPdf)$ given in \cref{eq:elbo_lb} now follows from combining \cref{eq:Ef_lb} with \cref{eq:ELBOpart1,eq:ELBOpart2}.
\end{proof}
We conclude with some remarks that relate to consequences of the theorem.

Note first that the gap $\frac{\dictcoeffmodepar}{\dictcoeffpar}\|\dictcoeffmode\|_0$ is relatively small. It constitutes a $\|\dictcoeffmode\|_0/m$ fraction of the term $m \dictcoeffmodepar/\dictcoeffpar$. 
Thus, with sufficient sparsity we are very close to optimizing \ac{ELBO} even without sampling from the posterior.

Next, a similar result can be shown assuming that $\ApproxPdf$ is a Gaussian density function, however in this case, the gap between \ac{ELBO} and our optimization objective (lower bound of \ac{ELBO}) might be larger. In any case, whether assumption on the posterior shape is appropriate or not is quantified by \ac{KL} term. 

Finally, this theorem also points to a connection between dictionary learning and \acp{VA}. The synthesis operator in dictionary learning can be seen as a decoder with one convolutional layer, the sparse coding defines an encoder. However, in dictionary learning the posterior is parametrized by assuming it is Laplace distributed around the mode, while \acp{VA} assume Gaussian distribution. Since Laplace distribution is very spiky a good (even though biased) approximation of \ac{ELBO} is obtained without sampling from the posterior. Thus, the encoder does not have to predict variance of the posterior distribution, it is sufficient to predict the mode. In both cases the assumption that posterior has a certain form might not be appropriate, leaving the opportunity for both methods to fail.

\subsection{Implementation of dictionary learning}
\label{ssec:dl_implementation}
 
 In this section we discuss practical implementation of the synthesis operator  $\SynthesisOp_{\Dict} \colon \DictCoeffSet \to \RecSpace$ and the procedure of learning a good dictionary $\Dict$.

\subsubsection{Generative model}\label{sec:ImplGenMod}
In the \cref{sec:dl_theory} we made two assumptions regarding the generative model in \cref{eq:GenModel}. First, the synthesis operator \ref{as:a2} is linear, and second, dictionaries have fixed norm \ref{as:a3}. In this section we present further choices that are made to apply the model in practice.

As commonly done, we use the generative model \cref{eq:GenModel} to model only high frequency features, while a low frequency component is estimated using \ac{FBP} with a low-pass filter. Secondly, we use non-overlapping image patches to learn dictionary atoms. Further, follows the discussion of these choices.

\paragraph{Subtracting the low frequency component} 
The dictionary model \cref{eq:GenModel} needs to have sufficient capacity to represent ``natural'' signals in $\RecSpace$. 
Digitizing such signals will in many applications, like imaging, result in high dimensional arrays. 
Sparsely representing the entire signal with a fixed dictionary will therefore require impractically many atoms. In addition, learning those atoms will be challenging, bearing in mind the limited number of training data examples that are available in practice. A common approach to address the above is to use dictionaries for representing only the high frequency component of the signal. 

Using a dictionary of the above type in signal reconstruction must include a step that recovers the low frequency component of the signal from noisy data in order to remove it.  
For image de-noising, one can simply recover the low frequency component by low-pass filtering data (noisy image).
The corresponding approach in \ac{CT} image reconstruction is to use \ac{FBP} with a cut-off frequency for the reconstruction kernel that is far below the Nyquist frequency that is dictated by sampling theory.
This ensures an over-smoothed image representing the low frequency component. 
In our experiments, we set the cut-off frequency to 10\% of the Nyquist frequency.

The above approach also requires one to train the dictionary against high frequency components of natural signals, i.e., one needs to remove low frequency components of images in the training set. This could be done by simply applying a low-pass filter to the images. However, in \ac{CT} image reconstruction, we don't have images available and the low frequency component must be obtained differently. Using different procedures for extracting low frequency components during dictionary learning and during \ac{CT} reconstruction is likely lead to sub-optimal results. For this reason, we employ the following procedure to remove low frequency components of images in the training set: First, generate noise free synthetic data by applying the \ac{CT} forward operator on the high resolution CT images in the training set. Next, compute the corresponding low frequency component of the images by applying \ac{FBP} on this noise free synthetic data.
This results in an approach for dictionary learning adapted for \ac{CT} reconstruction. 
Finally, the approach is unsupervised in the sense that it only requires access to high quality \ac{CT} images, i.e., it does not require any access to \ac{CT} projection data.

\paragraph{Training on patches}\label{patchTraining} 
When we learn dictionary $\Dict$ we use a patch-based synthesis operator $\SynthesisOp^p_{\Dict} \colon \DictCoeffSet  \to \RecSpace$ that  generates an image from sparse representations of on non-overlapping image patches. In practice, we implement it as a convolutional operator with a stride that is equal to the size of a dictionary atom. However, when we use the learned dictionary for regularization, we set the stride to 1, i.e. use a regular convolutional synthesis operator $\SynthesisOp_{\Dict} \colon \DictCoeffSet  \to \RecSpace$.
We motivate this choice in a discussion below.

The generative model defined in \cref{eq:SynthesisOp} traditionally has been applied to image patches instead of  whole images. To avoid ``block'' artifacts in signal restoration tasks (like de-noising and reconstruction), the whole image has been processed by applying the model to overlapping patches independently and then averaging the results. 

Patch-based dictionary learning described above and convolutional dictionary learning define different ways to learn a dictionary. However, when a learned dictionary is applied for regularization in some task, the two approaches rely on very similar generative models. In a patch-based setting for each patch there is a corresponding sparse signal representation (dictionary coefficients). If these representations are combined together to form  coefficient maps $\dictcoeff_i \in \RecSpace$ for each dictionary element $\dictatom_i \in \Dict$, then a full image can be generated by applying a convolutional synthesis operator as defined \cref{eq:SynthesisOpConv}. In fact, the only difference between the two methods is that in the patch-based approach dictionary coefficients are computed independently for each patch. In contrast, in the convolutional framework the sparse coding is done jointly for all overlapping patches in the image. 

Convolutional generative model have several advantages. Joint optimization with respect to coefficients allows to obtain closer approximation of an image given the same sparsity level, making the model more flexible. Moreover, a dictionary does not have to contain shifted versions of the same atoms, which makes the model more efficient \cite{chambolle2020convex}. On the other hand, in \cite{simon2019rethinking} authors argue that atoms in a convolutional dictionary suitable for representing natural images have high correlation with shifted versions of themselves, which makes uniqueness and stability guarantees for the sparse coding step obsolete. Non-uniqueness of the sparse representation is problematic in training/dictionary learning, because in this case the posterior distribution of coefficients is not unimodal and the Laplace approximation is not appropriate. Subsequently, solving dictionary learning problem \cref{eq:typical} might not maximize the log-likelihood \cref{eq:ELBOsplit}. This is supported by the observation that using convolutional dictionaries for regularization in de-noisining has been problematic in practice. Indeed, works that we know have succeeded to learn only small ($8\time8$) dictionary atoms for representing high frequency components. 

Our initial goal was to use convolutional dictionary learning with large dictionary atoms, to provide stronger regularization in solving inverse problems. However, we also found that learning such atoms in convolutional setting is very hard, since it is hard to ensure both sparsity and that dictionary atoms are regularly updated. First, we tried to address this issue by dropout. However, we discovered that best option in terms of results and computational efficiency is to learn the dictionary on patches, by using a patch-bases synthesis operator $\SynthesisOp^p_{\Dict}$.

\paragraph{Regularization with convolutional synthesis operator}  
Since we discovered that it is beneficial to learn a dictionary on non-overlapping patches, it would be natural to apply this dictionary for regularization as it is usually done in the patch-based approach. However, we find that using the convolutional model with the learned dictionary gives slightly better results in practice, see \cref{app:CompPatch} for comparison. Therefore, we substitute operator $\SynthesisOp^p_{\Dict}$ by $\SynthesisOp_{\Dict}$.
As a consequence, a different generative model is used in training and in application. A natural question is to elucidate how this affects the  distribution $\ModelDistr_{\Dict}$ in \cref{sec:EvidenceMax} that is given by the learned dictionary. 

First, the new synthesis operator will increase the model capacity, therefore $\signal \mapsto f(\signal, \dictcoeffmode)$ in \cref{eq:elbo_lb} will decrease. Unfortunately, the model will also become better at representing unwanted features, which is reflected by an increase in the other terms in \cref{eq:elbo_lb}, like the dimension $m$ of the coefficient space.
This increase in model capacity most likely leads to overall decrease \ac{ELBO}. However, the increase in model capacity is limited by the fact that during the training on non-overlapping patches, dictionaries are forced to learn shifted versions of themselves. Therefore, the posterior distribution of the coefficients becomes multi-modal and $\KL(\ApproxPdf \mid \pdf_{\stdictcoeff \mid \stsignal})$ increases. This explains how the approach can still lead to good results in practice.

\subsubsection{Solving the non-convex optimization problem}
In principle one could learn the best generative model by maximizing the right-hand-side of \cref{eq:elbo_lb} jointly with respect to $\Dict$ and the hyper-parameters $\signalnoisepar$, $\dictcoeffpar$, and $\dictcoeffmodepar$.
However, this results in a non-convex optimization problem whose solution depends on the initialization.
More precisely, if one initializes with poor dictionaries and optimize with respect to hyper-parameters, one risks converging to a trivial solution where all image features are explained by noise and not the dictionary. 

For the above reasons, we fix hyper-parameters and for given training data  $\{\signal_i\}, i = 1,\dots N$ (with low frequency components removed as outlined in \cref{sec:ImplGenMod}), we learn the dictionary by solving 
\begin{equation}
\label{eq:DL_inpractice}
\begin{split}
    &\min_{\Dict}  \sum_{i=1}^N  \min_{\dictcoeff_i}  \|\SynthesisOp^p_{\Dict}(\dictcoeff_i) - \signal_i\|^2+ \regparamrec \|\dictcoeff_i\|_1,\\
    &\|\dictatom\| = 1, \quad \text{for all } \dictatom \in \Dict.
\end{split}
\end{equation}

\paragraph{Stochastic optimization with alternating steps}

Since the number of available samples $N$ is high, we solve \cref{eq:DL_inpractice} using stochastic optimization with batch size 1. 
This means that at each iterate $k$, only one random image $\signal_{i_k}$ is used to update the dictionary. 
In addition, we randomly crop a smaller image patch from the image. 

We solve the optimization problem by minimizing the objective with respect to dictionary and coefficients in alternating steps.
First, dictionary coefficients $\dictcoeff_k$ are computed by solving a convex sub-problem:
\begin{equation}\label{eq:DictCoeffUpdate}
   \dictcoeff_{k}  = \argmin_{\dictcoeff}  
          \bigl\Vert\SynthesisOp_{\Dict_k}(\dictcoeff) - \signal_{i_k} \bigr\Vert_2^2
          + \regparamrec_k \Vert \dictcoeff \Vert_1.
\end{equation}
This is done by applying \ac{FISTA} \cite{Beck:2009aa} for a fixed number of iterates. 
Secondly, the dictionary is updated using the previous iterate $\Dict_k$ and the $\Dict$-gradient of the objective in \cref{eq:DictCoeffUpdate} at $\Dict_k$:
\begin{equation}\label{eq:GenericUpdate}
    \Dict_{k+1} = \operatorname{OptUpdate}\biggl( \Dict_{k}, 
    \nabla_{\!\Dict} \Vert \SynthesisOp_{\Dict_k}(\dictcoeff_{k}) - \signal_{i_k} \Vert_2^2
     \biggr).
\end{equation}
In particular, simple \ac{SGD} has updates of the form
\begin{equation}\label{eq:GradDescentUpdate}
       \Dict_{k+1} 
    =  \Dict_{k} - \alpha_k 
     \nabla_{\!\Dict} \Vert \SynthesisOp_{\Dict_k}(\dictcoeff_{k}) - \signal_{i_k} \Vert_2^2.
\end{equation}
We find that using the Adam optimizer \cite{Kingma:2014aa} instead of \ac{SGD} leads to better results. In Adam the accumulated gradient is normalized with respect to accumulated variance of the gradients. 
In dictionary learning, this helps to avoid so called ``dead'' atoms - the atoms that are not picked up in the sparse coding step and subsequently not updated. Indeed, we learn two times more different dictionary atoms while using the Adam optimizer compared to regular \ac{SGD}.

Note that $\nabla_{\!\Dict}$ in \cref{eq:GenericUpdate,eq:GradDescentUpdate} is the $\Dict$-gradient  of 
$\Dict \mapsto \bigl\Vert
   \SynthesisOp_{\Dict}(\dictcoeff) - \signal 
 \bigr\Vert_2^2$ 
which can be explicitly expressed as 
\[
\nabla_{\!\Dict}
  \bigl\Vert
        \SynthesisOp_{\Dict}(\dictcoeff) - \signal 
      \bigr\Vert_2^2 
=
2 \bigl[ \partial_{\Dict} \SynthesisOp_{\Dict}(\dictcoeff)\bigr]^*
    \bigl( 
      \SynthesisOp_{\Dict}(\dictcoeff) - \signal
    \bigr).
\]
Here, $\partial_{\Dict} \SynthesisOp_{\Dict}(\dictcoeff)$ is the $\Dict$-Jacobian of $\Dict \mapsto \SynthesisOp_{\Dict}(\dictcoeff)$.

\paragraph{Adaptive choice of regularization parameter}
If the regularization parameter $\regparamrec$ is constant during the training, many dictionary atoms will not be updated in the beginning of the optimization process and therefore will not be used later. Thus, it is important to be flexible with sparsity level $\regparamrec$ to insure that majority of the atoms get a good start. We address this problem by adopting $\regparamrec$ during the optimization so that the average number of non-zero elements in $\dictcoeff$ stays approximately constant. This results in $\regparamrec$ rising in the beginning of the optimization and then stabilizing around one value. 

 This is done by estimating the average sparsity level on the validation set $\estim{s} = \frac{1}{N_{val}} \sum_{i=1}^{N_{val}}\Vert \dictcoeff^{t}_i \Vert_0$. 
If $s$ is the pre-set sparsity level, then during every validation step we have for some small constant $c$ 
\begin{equation}
\begin{split}
  \regparamrec_{t+1}
    &=   \begin{cases}
          \regparamrec_t + c (\estim{s} - s  ) 
                 & \text{ if $\vert  \estim{s} - s\vert > 0.2 s$, and $t \operatorname{mod} 10 = 0$} \\
          \regparamrec_t & \text{otherwise.}
        \end{cases} \\    
\end{split}
\end{equation}

An alternative approach to dictionary initialization is to apply clustering algorithms to image patches and initialize atoms as cluster centres \cite{agarwal2013exact, arora2014new}. However, this approach is likely to be  too demanding from the computational point of view, given a large number of image patches that we have.

\section{Dictionary based regularization in inverse problems}
\label{sec:dl_regularization}

An inverse problems is a problem of recovering an unknown signal $\signaltrue \in \RecSpace$ from data $\data \in \DataSpace$ that represents indirect  noisy observations of the signal.
In statistical terms we are interested in the distribution of the conditional random variable $( \stsignal \mid \stdata = \data )$, where 
\begin{equation}\label{datamodel}  
\stdata = \ForwardOp(\stsignal) + \stdatanoise. 
\end{equation}
Here, the mapping $\ForwardOp \colon \RecSpace \to \DataSpace$ (forward operator) is assumed to be known and  $\stdatanoise$ is a random variable that generates observation errors.  

Given a prior distribution $\stsignal \sim \SignalPrior$ the posterior density is expressible as 
\begin{equation}\label{posteq}
    \pdf_{\stsignal \mid \stdata}(\signal \mid \data) \propto \pdf_{\stdata \mid \stsignal}(\data \mid \signal ) \pdf_{\stsignal}(\signal) = \pdf_{\stdatanoise \mid \stsignal}\bigl(\data-\ForwardOp(\signal) \mid \signal \bigr) \pdf_{\stsignal}(\signal) 
\end{equation}
where the density $\pdf_{\stdatanoise \mid \stsignal}$ for the observation error is usually know. 
Furthermore, the prior $\SignalPrior$ can be approximated by $\ModelDistr_{\estim{\Dict}}$, where $\estim{\Dict}$ is a learned dictionary:
\begin{equation}\label{eq:PriorApprox}
\begin{split}
    \pdf_{\stsignal}(\signal)  
    &\approx  \int \pdf_{\stsignal \mid \stdictcoeff} (\signal \mid \dictcoeff) \pdf_{\stdictcoeff}(\dictcoeff) \dint \dictcoeff.
\end{split}
\end{equation}

The true solution $\signaltrue$ to the inverse problem \cref{datamodel} can be estimated by the \ac{MAP} estimator, which is defined as the maximum of $\signal \mapsto \pdf_{\stsignal \mid \stdata}(\signal \mid \data)$.
Assuming the true prior can be approximated by a learned  dictionary as in  \cref{eq:PriorApprox} introduced a marginalization over $\stdictcoeff$, which is computationally unfeasible, so an alternative is to compute unmarginalized MAP estimator:
\begin{equation}\label{eq:RecoProblemGeneric}
\begin{split}
    (\estim{\signal}, \estim{\dictcoeff}) 
    &\in \argmax_{(\signal, \dictcoeff) \in \RecSpace \times \DictCoeffSet} \pdf_{\stsignal, \stdictcoeff \mid \stdata}(\signal, \dictcoeff \mid \data)\\
    &= \argmax_{(\signal, \dictcoeff) \in \RecSpace \times \DictCoeffSet} \pdf_{\stdata \mid \stsignal} (\data \mid \signal) \pdf_{\stsignal \mid \stdictcoeff} (\signal \mid \dictcoeff) \pdf_{\stdictcoeff}(\dictcoeff) \\
    &= \argmax_{(\signal, \dictcoeff) \in \RecSpace \times \DictCoeffSet} \pdf_{\stdatanoise \mid \stsignal}\bigl(\data-\ForwardOp(\signal) \mid \signal \bigr)  \pdf_{\stsignalnoise} (\signal - \SynthesisOp_{\estim{\Dict}}(\dictcoeff)) \pdf_{\stdictcoeff}(\dictcoeff) \dint \dictcoeff. 
\end{split}
\end{equation}

Inserting the specific expressions for the densities of the random variables $(\stdatanoise \mid \stsignal)$, $\stsignalnoise$, and $\stdictcoeff$ into \cref{eq:RecoProblemGeneric} yields
\begin{equation}\label{eq:RecoProblem}
\begin{split}
    (\estim{\signal}, \estim{\dictcoeff}) 
    &\in\argmax_{(\signal, \dictcoeff)\in \RecSpace \times \DictCoeffSet} \DataLogLikelihood\bigl(\ForwardOp (\signal),  \data\bigr) + \regparamrec_1 \bigl\| \signal -\SynthesisOp_{\estim{\Dict}}(\dictcoeff) \bigr\|_2^2 +  \regparamrec_2 \|\dictcoeff\|_1
\end{split}
\end{equation}
where $\DataLogLikelihood\bigl(\ForwardOp (\signal),  \data \bigr) := - \log \pdf_{\stdatanoise \mid \stsignal}\bigl(\data-\ForwardOp(\signal) \mid \signal \bigr) $ is the data (negative) log-likelihood and the regularization parameters are $\regparamrec_1 := 1/\signalnoisepar$ and  $\regparamrec_2 := 1/\dictcoeffpar$.
An equivalent bi-level optimization formulation to \cref{eq:RecoProblem} reads as 
\begin{numcases}{}
   \estim{\signal} \in \argmin_{\signal \in \RecSpace} 
  \DataLogLikelihood\bigl(\ForwardOp(\signal),\data\bigr)
   +
   \regparamrec_1 \RegFunc_{\estim{\Dict}}(\signal)  
   & \label{eq:RecoDict1}
   \\
   \RegFunc_{\Dict}(\signal) \in \min_{\dictcoeff \in \DictCoeffSet}
     \bigl\Vert 
       \signal - \SynthesisOp_{\Dict}(\dictcoeff)
     \bigr\Vert_2^2 
     + \frac{\regparamrec_2}{\regparamrec_1} \Vert \dictcoeff \Vert_1. 
   & \label{eq:VarProb1}
\end{numcases}

\subsection{Implementation}
Signals (images) and measured data are in practice digitized and  represented by arrays, i.e., $\RecSpace = \mathbb{R}^\signaldim$ and $\DataSpace = \mathbb{R}^\datadim$ in \cref{datamodel}.
Next, we assume that observation noise $(\stdatanoise \mid \stsignal = \signal) $ has a Poisson distribution conditioned on noise-free data $\ForwardOp(\signaltrue)$ generated by the signal. 
To simplify computations we use a quadratic approximation for the data log-likelihood in \cref{eq:RecoProblem}, that results in the following weighted $l_2-$norm \cite{elbakri2002statistical}:
\begin{equation}\label{eq:WeightedL2}
    \DataLogLikelihood\bigl(\ForwardOp (\signal), \data\bigr) = \sum_{i = 1}^{l} w_i \bigl\Vert \ForwardOp (\signal)_i - \data_i \bigr\Vert_2^2
    \quad\text{where weights $w_i = e^{-\data_i}$.} 
\end{equation}

There are now several ways to solve the corresponding convex optimisation problem in \cref{eq:RecoProblem}. First, \ac{FISTA} \cite{Beck:2009aa} can be formulated with $(\signal, \dictcoeff)$ as the control variable. 
A different approach is to use an alternating scheme and apply a gradient descent step to variable $\signal$ and a proximal gradient descent to $\dictcoeff$ in alternating steps. 
Such a scheme would guarantee a monotonic decrease of the objective function given an appropriate selection of the step size \cite{beck2017first}. 
In order to speed up the optimization, we use an accelerated gradient descent \cite{Nesterov:1983} to update $\signal$, and an accelerated proximal gradient descent (as in \ac{FISTA} \cite{Beck:2009aa}) to update $\dictcoeff$, which results in the following iterative scheme: 
\begin{equation}
\begin{split}
    t_{k+1} &= \frac{1+\sqrt{1+4t_k^2}}{2}\\
    \signal_{k+1} &= \signal^{\prime}_{k} - \frac{1}{L_{\signal}} \nabla_{\signal} f (\signal^{\prime}_{k}, \dictcoeff^{\prime}_{k}) \\
    \signal^{\prime}_{k+1} &= \signal_{k+1} + \frac{t_k-1}{t_{k+1}} \left (\signal_{k+1} - \signal_{k}\right ) \\
    \dictcoeff_{k+1} 
    &= \operatorname{prox}_{g}^{ 1/L_{\dictcoeff}}\Bigl(
      \dictcoeff^{\prime}_{k+1} - \frac{1}{L_{\dictcoeff}} \nabla_{\dictcoeff} f(\signal_{k+1}, \dictcoeff^{\prime}_{k})
      \Bigr)
    \\
    \dictcoeff^{\prime}_{k+1} &= \dictcoeff_{k+1} + \frac{t_k-1}{t_{k+1}}\left(\dictcoeff_{k+1} - \dictcoeff_{k}\right)\\
\end{split}        
\end{equation}
for $k\geq 1$ where 
$
    f(\signal, \dictcoeff) := \DataLogLikelihood\bigl(\ForwardOp (\signal),  \data\bigr) + \regparamrec_1 \bigl\| \signal -\SynthesisOp_{\estim{\Dict}}(\dictcoeff) \bigr\|_2^2 
$
is the smooth part of the objective in \cref{eq:RecoProblem} and $L_{\signal}, L_{\dictcoeff}$ are the Lipschitz constants of the gradient of $f$ with respect to $\signal$ and $\dictcoeff$. The proximal operator for the non-smooth part of the objective $g(\dictcoeff) =\regparamrec_2 \|\dictcoeff\|_1$ admits a closed form expression
\begin{equation*}
\begin{split}
    \operatorname{prox}_{g}^{\gamma}(u) &=  \argmin g(z) + \frac{1}{2\gamma} \|z - u\|^2 \\
    &= \operatorname{sign}(u) 
       \max\bigl(|u| - \gamma\regparamrec_2, 0\bigr).
\end{split}
\end{equation*}
The essential difference between the above scheme and \ac{FISTA} (for the variable $(\signal, \dictcoeff)$) is that we are separately estimating the step size for $\signal$ and $\dictcoeff$, which results in using larger step sizes and, hence, faster convergence. 
However, we are not aware of any theoretical results guaranteeing  convergence of our accelerated scheme in the general case, we verify that it results in a monotonic decrease of the objective function values in our experiments. 
Note, that we perform the optimization for the limited number of iterations, thus we do not necessarily reach the state of full convergence. 
In fact, running the process until full convergence leads to sub-optimal results. 
This observation that early stopping provides additional regularisation has been seen with other variational methods, like \ac{TV} regularisation \cite{Effland:2020aa}.

\section{Application to \ac{CT} image reconstruction}
\label{sec:ct_rec}
\subsection{Experimental setting}

To perform the experiments we simulate projection data using images of human abdomen provided by the 2016 AAPM Low Dose CT Grand Challenge \cite{McCollough:2017aa}. The dataset contains \ac{CT} scans of 10 patients, 9 of which are used for training and one is reserved for testing. To avoid computational burden we work in a 2D setting, splitting 3D volumes into axial slices of size $512 \times 512$. This results in 2168 and 210 image-data pairs for training and testing respectively. Moreover, 1\% of the training images are  separated into validation set, which is used for setting the hyper-parameters for the evaluated methods. The choice of  hyper-parameters is made with an objective to minimize the mean squared error (MSE) on the validation set. Lastly, we compute evaluation metrics as for one 3D image, when we report the results, even though the methods are applied in a slice by slice manner.

Since the images are given in Hounsfield scale, we re-scale those by a factor $\mu_0/1000$, where $\mu_0 = 0.0192 \text{mm}^{-1}$ is X-ray attenuation of water at the mean X-ray energy of 70~keV. 
The noise-free tomographic data $y = \ForwardOp(x)$ is simulated using cone beam ray transform with 1000 detector elements and 1000 projection angles. According to the Beer–Lambert law the number of  photons absorbed by the detector is 
\begin{equation}
    N_{d} = N_0 e^{-\ForwardOp(\signal)}.
\end{equation}
Here $N_0$ is an average number of photons that would reach a detector element in an empty space. Since $N_0$ is proportional to radiation intensity, smaller values correspond to a lower dose.
For a single energy X-ray noise is modelled as a Poisson distributed random variable with mean $N_{d}$:
\begin{equation}
    N_{\text{noisy}} = \text{Poisson}\left( N_0 e^{-\ForwardOp(\signal) } \right)
\end{equation}
Finally, we linearize the noisy data
\begin{equation}
    \data_{\text{noisy}} = - \ln\left(\frac{1}{N_0} N_{\text{noisy}} \right)
\end{equation}
so that $\ForwardOp(x) \approx \data_{\text{noisy}}$. In our experiments we use $N_0 = 50\,000$, which corresponds to a noise level experienced in clinical low-dose \ac{CT} imaging.

We learn 512 dictionary atoms of size $16 \times 16$ using optimization procedure described in \cref{ssec:dl_implementation}. During the training we aim for average sparsity level $\frac{3}{512} \approx 0.6 \%$. During the reconstruction we set $\regparamrec_1 = 50$ and $ \regparamrec_2 = 0.0016$ obtained by minimizing the validation error. The learned dictionary is visualized in \cref{app:dic_viz}.

First of all, we compare our method against \ac{FBP}, which is an analytic reconstruction method that computes a regularised approximate inverse. 
We use the (approximate) implementation of \ac{FBP} in \ac{ODL} \cite{odl} with the Hanning filter and a relative frequency cut-off 0.75.

\paragraph{Variational models} 
Our comparison includes reconstruction by several variational models.
All models can be defined as a solution to  
\begin{equation}\label{eq:var_framework}
    \estim{\signal} = \argmin_{\signal \in \RecSpace} \DataLogLikelihood\bigl(\ForwardOp(\signal), \data\bigr) + \regparamrec \RegFunc(\signal)
\end{equation}
with $\DataLogLikelihood \colon \DataSpace \times \DataSpace \to \Real_+$ as in \cref{eq:WeightedL2} and with different choices of $\RegFunc \colon \RecSpace \to \Real$.

The hyper-parameter $\regparamrec$ is a regularization parameter that governs the trade-off between stability and the need to fit data.
Ideally the reconstruction methods includes a parameter selection rule for setting it, typically based on the noise level in data. 
These do not necessarily ensure best performance, so in order to ensure the different methods are compared at their best performance, we set this parameter empirically against validation data as to optimise performance. 
This also applies to the number of iterations used in the optimization scheme for solving \cref{eq:var_framework}.
A discussion on the potential regularising property of early stopping for variational methods is given in \cite{Effland:2020aa}.

One variational model is \ac{TV} \cite{rudin1992nonlinear}, which corresponds to choosing $\RegFunc(\signal) =  | \nabla \signal |_1$ in \cref{eq:var_framework} with $\nabla$ denoting the spatial gradient operator. 
The optimal choice of the regularisation parameter is $\regparamrec = 3 \cdot 10^{-4}$.
We then solve \cref{eq:var_framework} using the \ac{ADMM} algorithm with 1\,000 iterations.

The next variational model is \ac{TGV}, which was introduced partly to address some of the drawbacks (like stair-casing) that comes with using \ac{TV} \cite{bredies2010total,bredies2015tgv}.
\Ac{TGV} corresponds to selecting 
\begin{equation}\label{eq:TGVfunc}
     \RegFunc(\signal) = \min_z ||\nabla x - z||_1 + \alpha ||\mathcal{E} z||_1
\end{equation}
in \cref{eq:var_framework} where $\mathcal{E}$ is the symmetrized gradient operator, see \cite{bredies2015tgv} for the details. 
The optimal choice of regularisation parameters in our case is $\regparamrec = 6 \cdot 10^{-4}$ and $\alpha = 0.5$.
The same optimization procedure as for \ac{TV} is then used to solve \cref{eq:var_framework} with \cref{eq:TGVfunc}.

Finally, we also include the commonly used Huber regularization, which is a smooth relaxation of \ac{TV} regularization that replaces the non-smooth $l_1$-norm with the smooth Huber functional \cite{erdougan1999statistical}. 
This corresponds to $\RegFunc(\signal) = \mathcal{H}_{\gamma}(\nabla \signal)$ in \cref{eq:var_framework} with $\mathcal{H}_{\gamma} \colon \Real^{2\signaldim} \to \Real_+$ (Huber functional) defined as
\begin{equation}\label{eq:HuberReg}
    \mathcal{H}_{\gamma}(z) = \sum_{i=1}^{2\signaldim}  \frac{z_i^2}{2\gamma}\mathbf{1}_{|z_i | < \gamma} +\sum_{i=1}^{2\signaldim} \left(|z_i| -\frac{\gamma}{2} \right)\mathbf{1}_{|z_i | \geq \gamma}
\end{equation}
The optimal choice of regularisation parameters for our case is $\regparamrec = 5 \cdot 10^{-4}$ and $\gamma =4 \cdot 10^{-4}$.
The optimization \cref{eq:var_framework} with \cref{eq:HuberReg} is solved using Nesterov's accelerated gradient descent \cite{Nesterov:1983} with 70~iterations. 
\paragraph{Deep learning based methods}  
\Ac{DIP} relies on the fact that merely a representation of a signal as an output of a convolutional neural network $\signal = f_{\theta}(\dictcoeff)$ is sufficient for regularization in inverse problems \cite{ulyanov2018deep, baguer2020computed}. Here, we use an architecture of a neural network suggested by \cite{ulyanov2018deep} for denoising to perform image reconstruction by minimizing
\begin{equation}\label{eq:DIPtrain}
\min_{\theta} \DataLogLikelihood\bigl(\ForwardOp(f_{\theta}(\dictcoeff)), \data\bigr)    
\end{equation}
with respect to parameters of the neural network. The latent variable $z = z_0 + \Delta z$, where $z_0$ is sampled only once (before the start of the optimization process) and $\Delta z$ is regularizing noise, which is sampled during every iteration for solving \cref{eq:DIPtrain}. 
This approach is completely unsupervised, since it does not require any training prior to the reconstruction. We use the same optimization procedure and hyper-parameters, with few exceptions. We reduce the standard deviation of the regularizing noise by a factor of 2 and increase the number of iterations to $6000$. We observe that the approach might slightly benefit from additional iterations an higher noise, however even in this setting it is already remarkably slow compared to the other methods. 

Another learned method that we used for comparison is \ac{AR} proposed in \cite{lunz2018adversarial}. The method uses variational formulation \cref{eq:var_framework}, where the regularizing component $\RegFunc(x)$ is a trained neural network. This method is can be seen as a semi-supervised, since it is trained using samples of both, reconstructions and projection data. However it does not rely on samples being coupled, i.e.\ correspond to the same scan. We use a default convolutional architecture and set regularization parameter $\regparamrec$ as suggested by the authors. The optimizations is done for 2000 iterations. 

Finally, \ac{LPD} is a supervised deep learning method that approximately computes the posterior mean image given measured data, so it has no regularization parameter.
It has shown state-of-the-art performance \cite{adler2018learned} in low-dose \ac{CT} reconstruction and we include a version of this method applied to log-data.

We have paid a reasonable amount of effort to improve the performance of all schemes in the particular test setting. In particular, we re-scaled the images for the deep learning methods so that the gray-scale values approximately fit the interval $[0,1]$. This is important, because the initialization of the neural networks is traditionally adapted for this scaling. Unsurprisingly, the architectures and hyper-parameters proposed by the authors in most cases provided the optimal performance. This observation is supported by the fact that \cite{lunz2018adversarial} and \cite{adler2018learned} has been originally evaluated on the data simulated from same image dataset.

\subsection{Results}

Quantitative performance results in terms of \ac{PSNR} and \ac{SSIM} are given in \cref{tab:results}. In both metrics, \ac{LPD} performs best. Second best is \ac{DL} followed by \ac{AR}. \Ac{FBP} performs worst in both metrics among the tested algorithms. 

\Cref{fig:slice36,fig:slice163} show exemplary slices from abdomen and pelvis, respectively, for a qualitative assessment of the image quality.

\Cref{fig:slice36} highlights two region of interests, which are well suited to assess the noise appearance and the delineation of low-contrast structures. 

With respect to noise texture, \ac{TV} shows the well-known behaviour with unnaturally looking flat regions with remaining salt-and-pepper noise. \Ac{TGV} shows a horizontally structured noise pattern, which is also present (but less pronounced) in the image with Huber regularization. AR shows a rather natural noise pattern, except for the aspect that it appears more low-frequent than the noise pattern in ground truth and \ac{FBP} images. The most natural noise pattern is achieved by DIP and \ac{LPD}. Finally, in \ac{DL} barely any noise is remaining.

Regarding the low-contrast delineation, we focus on the boundaries of the pancreas in the lower right zoomed image region. Here, \ac{TV} shows the well-known scruffy edges. The edge appearance improves in \ac{TGV} at the cost of a higher noise level. Edges appear visibly blurred in DIP and AR, which is consistent for AR with the observation of the presence of rather low-frequent noise. Pancreas delineation is best for Huber, \ac{DL} and \ac{LPD}. 

\Cref{fig:slice163} highlights the performance of the algorithms on low-contrast structures in the fatty tissue (bottom left zoom) high-contrast structures (bottom right zoom).

With respect to the structures in the fatty tissue, \ac{TV} appears patchy and unnaturally looking. All other methods show a comparable level of details. More differences between the algorithms are present in the bony structure inside the hip joint shown in the zoom in the bottom right. Note the arc-shaped bony structure in the center of the ground truths (\cref{fig:slice163}a). This structure is not recovered at all in DIP and \ac{AR}, while it is only partially visible in AR, Huber, \ac{TV}, \ac{TGV}. Only \ac{LPD} and \ac{DL} properly recover this structure.

In summary, the qualitative image quality assessment supports the quantitative evaluation that \ac{LPD} performs best followed by \ac{DL}.

\begin{figure}
\centering
\begin{subfigure}{0.3\linewidth}
\begin{tikzpicture}[spy using outlines={
  rectangle, 
  red, 
  magnification=3,
  size=0.4\linewidth, 
  connect spies}]
 \node{\includegraphics[width=\linewidth]{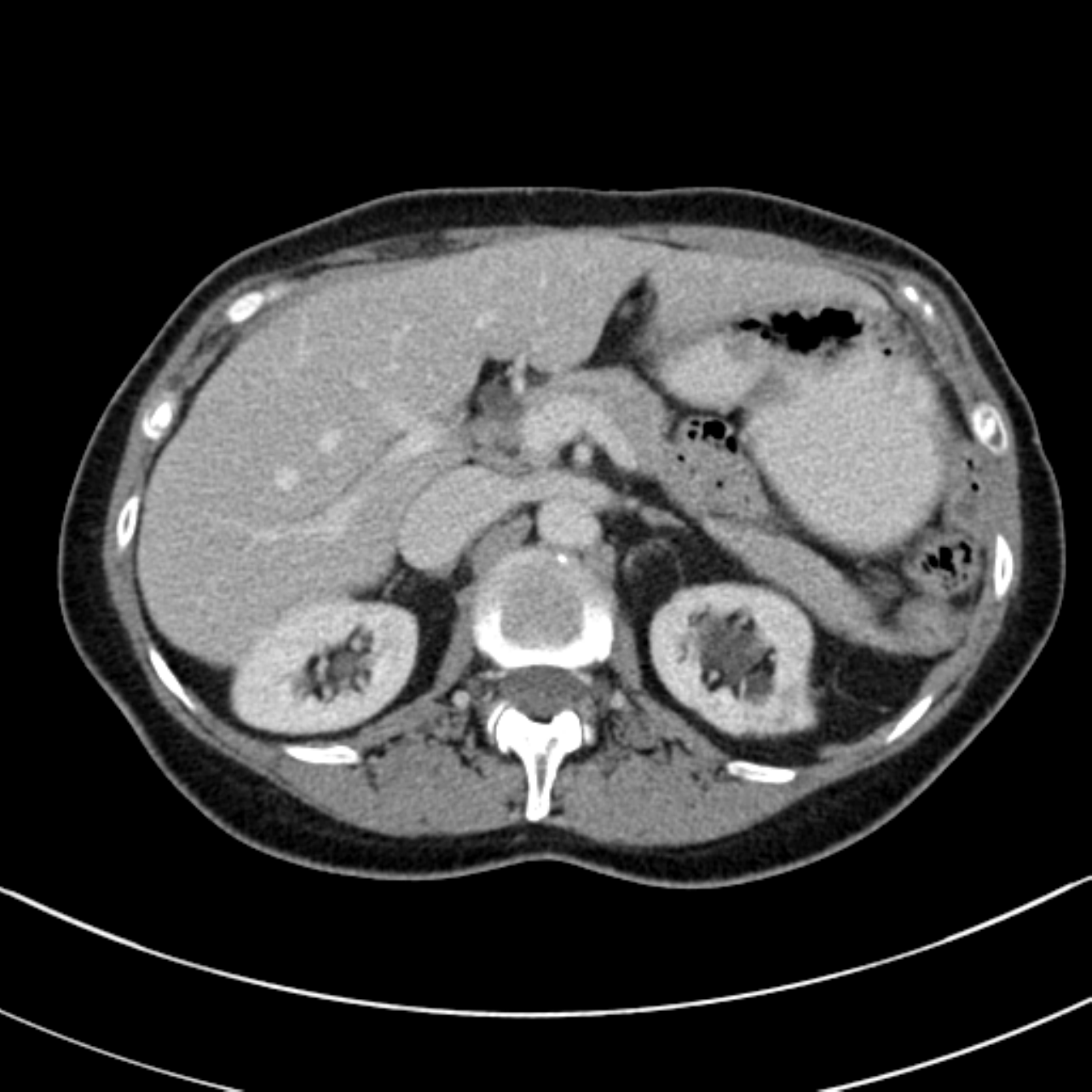}};
 \spy on (-0.35,0.2) in node [left] at%
   (0.5\linewidth,-0.3\linewidth);
 \spy on (-0.9,-0.17) in node [left] at%
   (-0.1\linewidth,0.4\linewidth);   
\end{tikzpicture}
\caption{True}
\end{subfigure}%
\hfill
\begin{subfigure}{0.3\textwidth}
\begin{tikzpicture}[spy using outlines={
  rectangle, 
  red, 
  magnification=3,
  size=0.4\linewidth, 
  connect spies}]
 \node{\includegraphics[width=\linewidth]{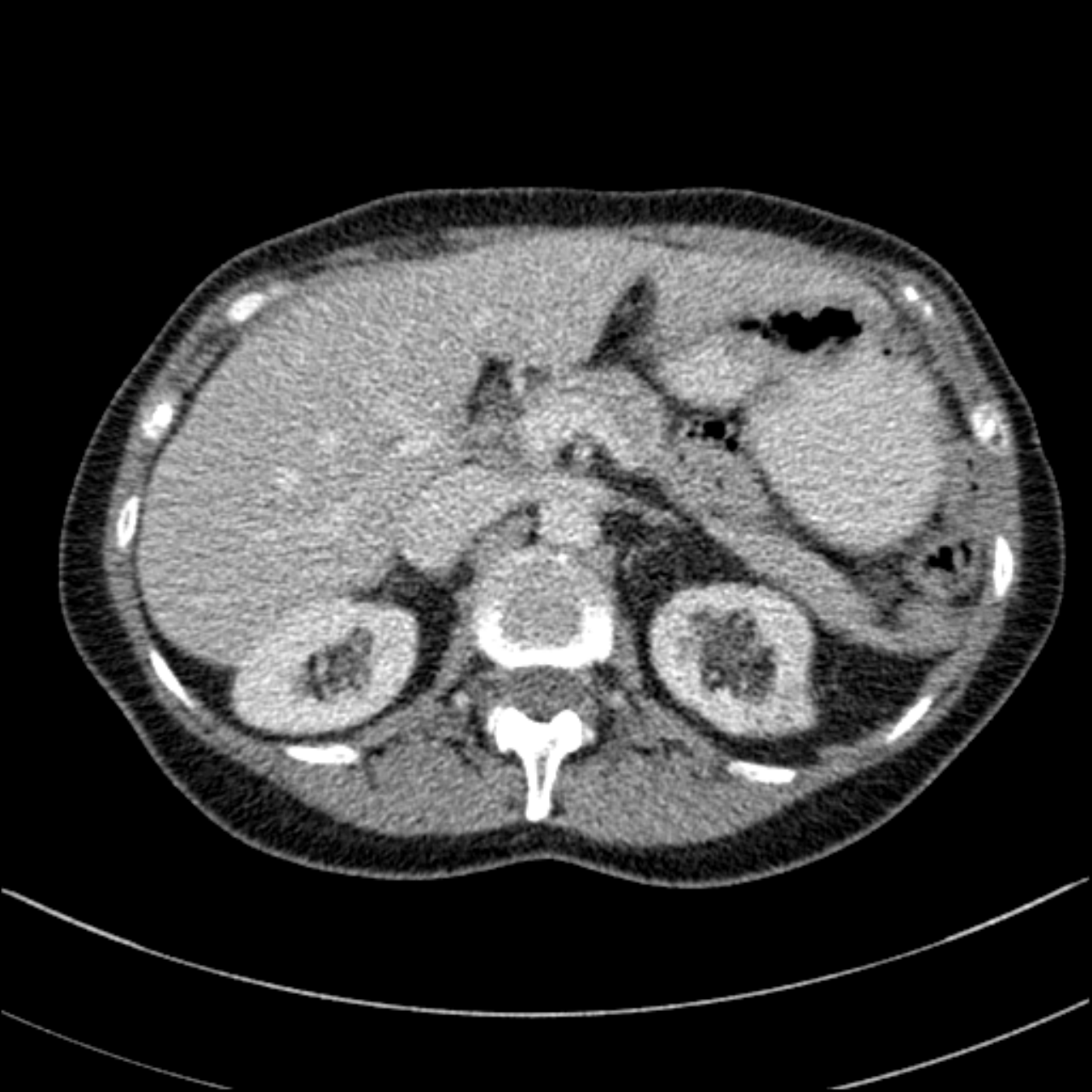}};
 \spy on (-0.35,0.2) in node [left] at%
   (0.5\linewidth,-0.3\linewidth);
 \spy on (-0.9,-0.17) in node [left] at%
   (-0.1\linewidth,0.4\linewidth);   
\end{tikzpicture}
\caption{FBP}
\end{subfigure}%
\hfill
\begin{subfigure}{0.3\textwidth}
\begin{tikzpicture}[spy using outlines={
  rectangle, 
  red, 
  magnification=3,
  size=0.4\linewidth, 
  connect spies}]
 \node{\includegraphics[width=\linewidth]{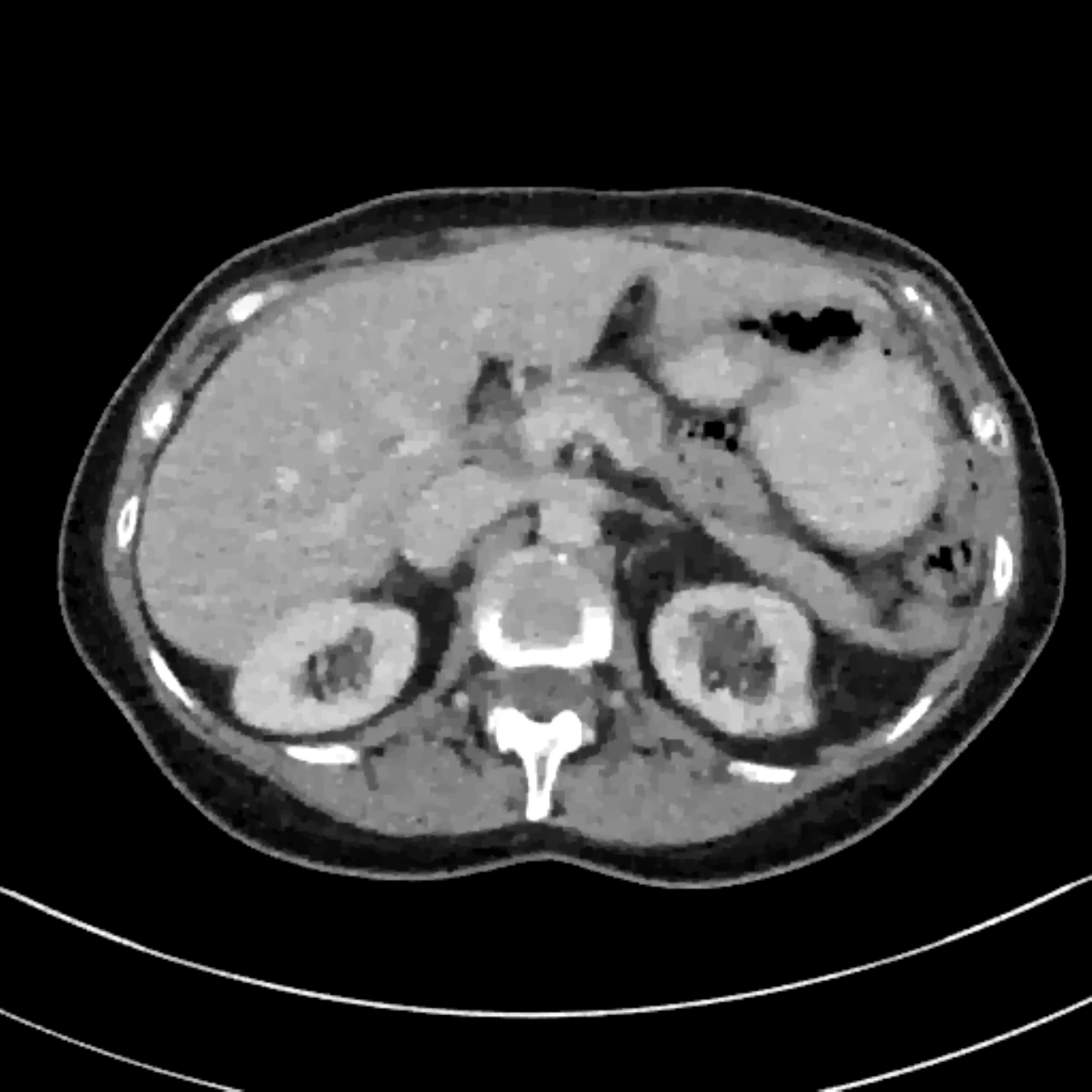}};
 \spy on (-0.35,0.2) in node [left] at%
   (0.5\linewidth,-0.3\linewidth);
 \spy on (-0.9,-0.17) in node [left] at%
   (-0.1\linewidth,0.4\linewidth);   
\end{tikzpicture}
\caption{TV}
\end{subfigure}%
\\[1em]
\begin{subfigure}{0.3\textwidth}
\begin{tikzpicture}[spy using outlines={
  rectangle, 
  red, 
  magnification=3,
  size=0.4\linewidth, 
  connect spies}]
 \node{\includegraphics[width=\linewidth]{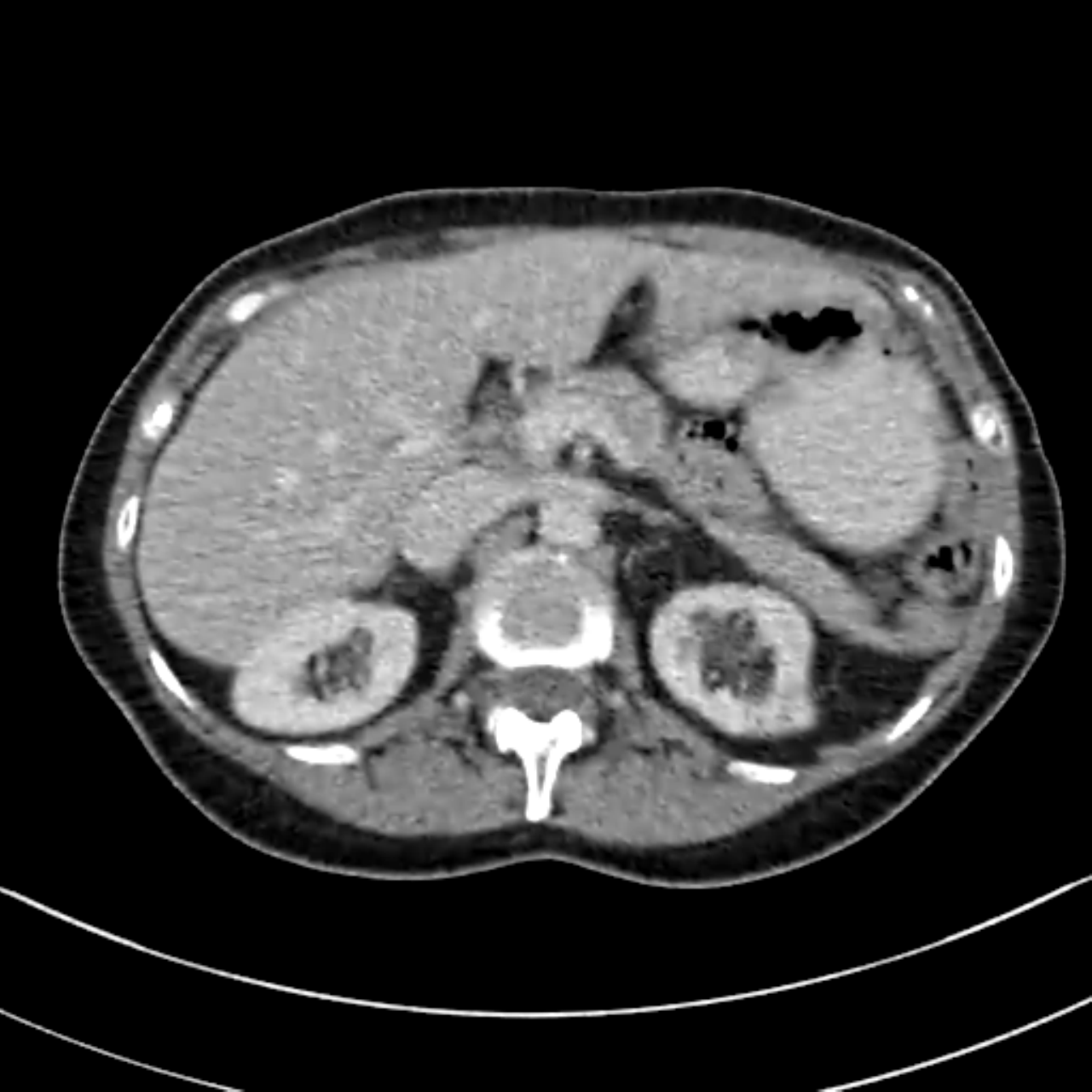}};
 \spy on (-0.35,0.2) in node [left] at%
   (0.5\linewidth,-0.3\linewidth);
 \spy on (-0.9,-0.17) in node [left] at%
   (-0.1\linewidth,0.4\linewidth);   
\end{tikzpicture}
\caption{TGV}
\end{subfigure}%
\hfill
\begin{subfigure}{0.3\textwidth}
\begin{tikzpicture}[spy using outlines={
  rectangle, 
  red, 
  magnification=3,
  size=0.4\linewidth, 
  connect spies}]
 \node{\includegraphics[width=\linewidth]{images/Huber_36}};
 \spy on (-0.35,0.2) in node [left] at%
   (0.5\linewidth,-0.3\linewidth);
 \spy on (-0.9,-0.17) in node [left] at%
   (-0.1\linewidth,0.4\linewidth);   
\end{tikzpicture}
\caption{Huber}
\end{subfigure}
\hfill 
\begin{subfigure}{0.3\textwidth}
\begin{tikzpicture}[spy using outlines={
  rectangle, 
  red, 
  magnification=3,
  size=0.4\linewidth, 
  connect spies}]
 \node{\includegraphics[width=\linewidth]{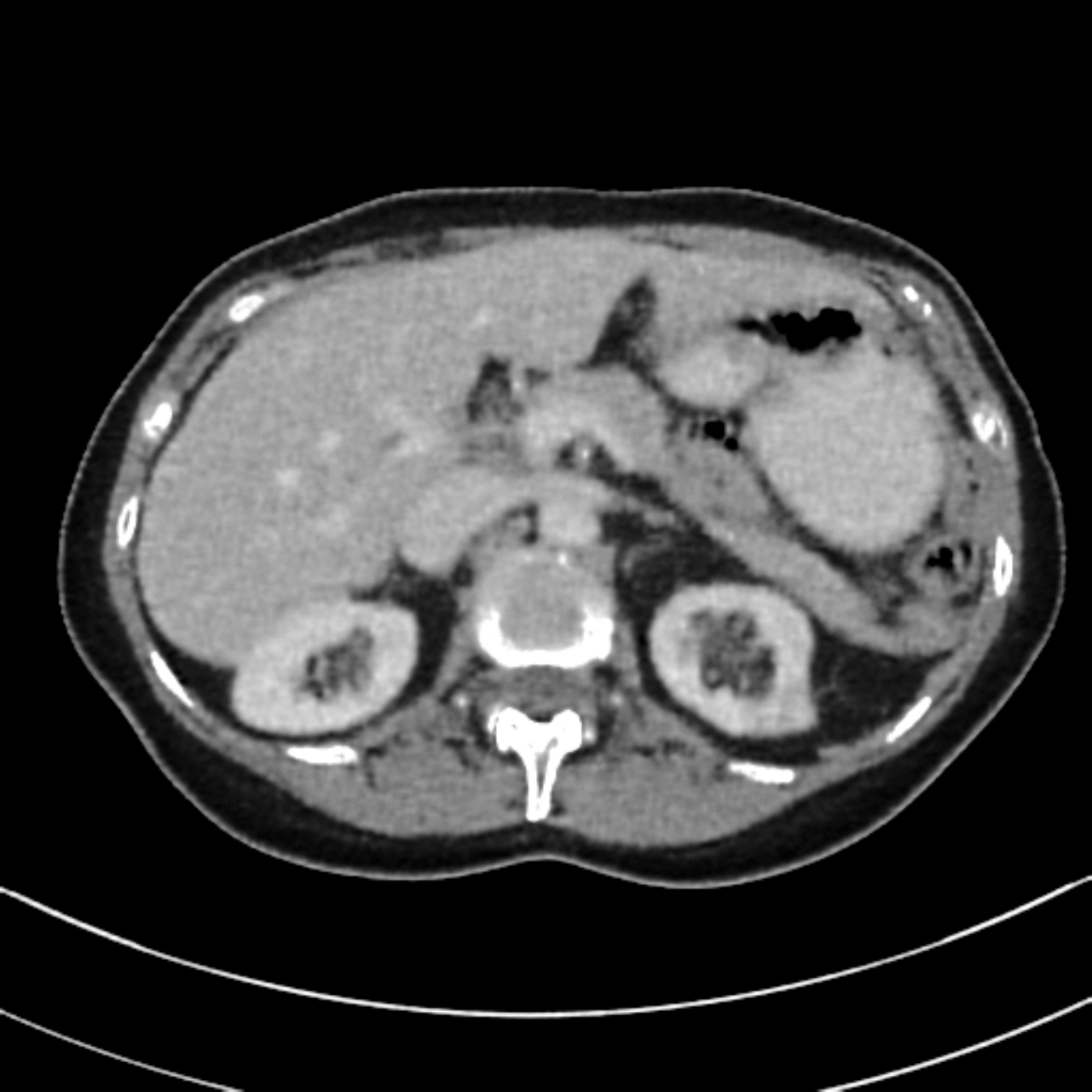}};
 \spy on (-0.35,0.2) in node [left] at%
   (0.5\linewidth,-0.3\linewidth);
 \spy on (-0.9,-0.17) in node [left] at%
   (-0.1\linewidth,0.4\linewidth);   
\end{tikzpicture}
\caption{DIP}
\end{subfigure}
\\[1em]
\begin{subfigure}{0.3\textwidth}
\begin{tikzpicture}[spy using outlines={
  rectangle, 
  red, 
  magnification=3,
  size=0.4\linewidth, 
  connect spies}]
 \node{\includegraphics[width=\linewidth]{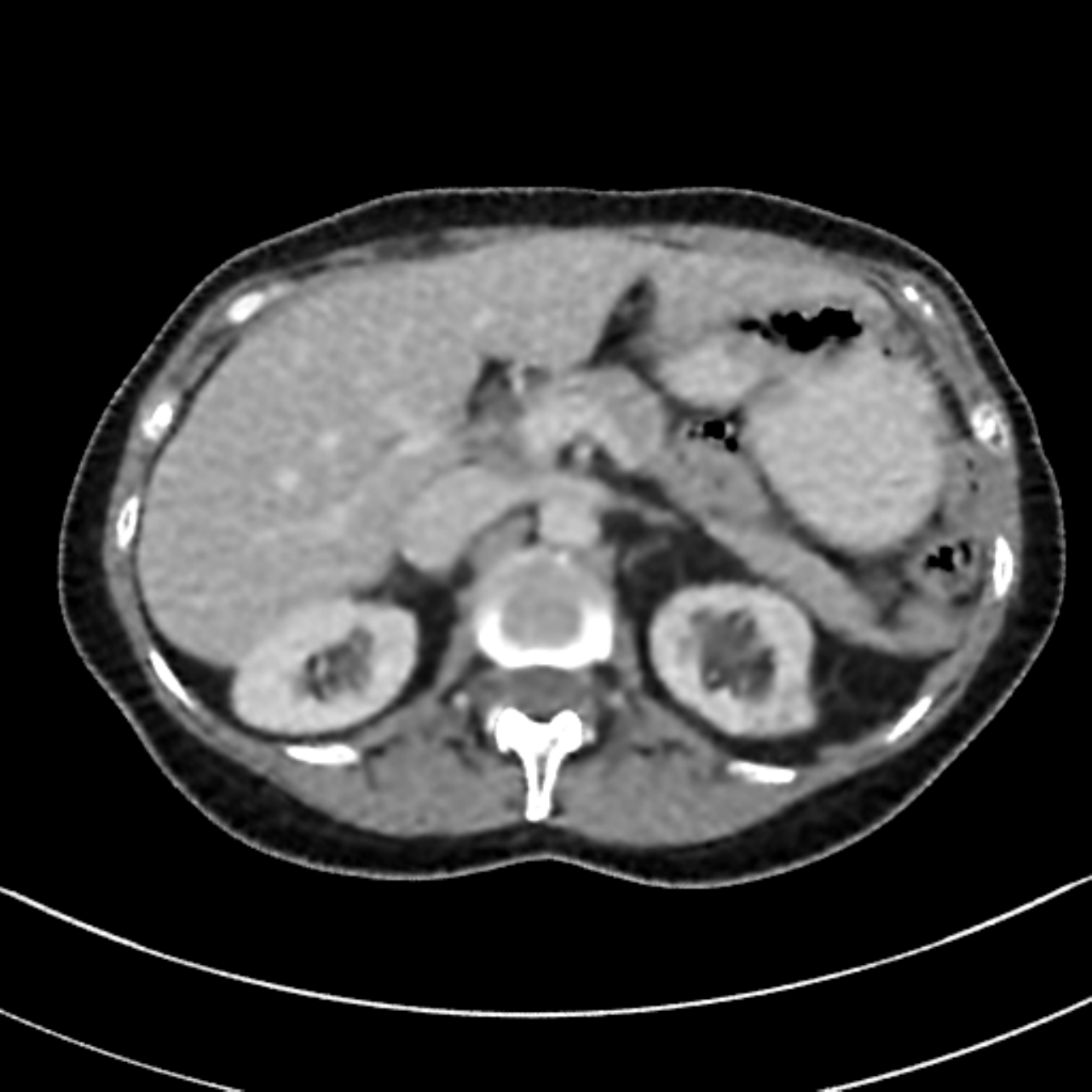}};
 \spy on (-0.35,0.2) in node [left] at%
   (0.5\linewidth,-0.3\linewidth);
 \spy on (-0.9,-0.17) in node [left] at%
   (-0.1\linewidth,0.4\linewidth);   
\end{tikzpicture}
\caption{AR}
\end{subfigure}%
\hfill
\begin{subfigure}{0.3\textwidth}
\begin{tikzpicture}[spy using outlines={
  rectangle, 
  red, 
  magnification=3,
  size=0.4\linewidth, 
  connect spies}]
 \node{\includegraphics[width=\linewidth]{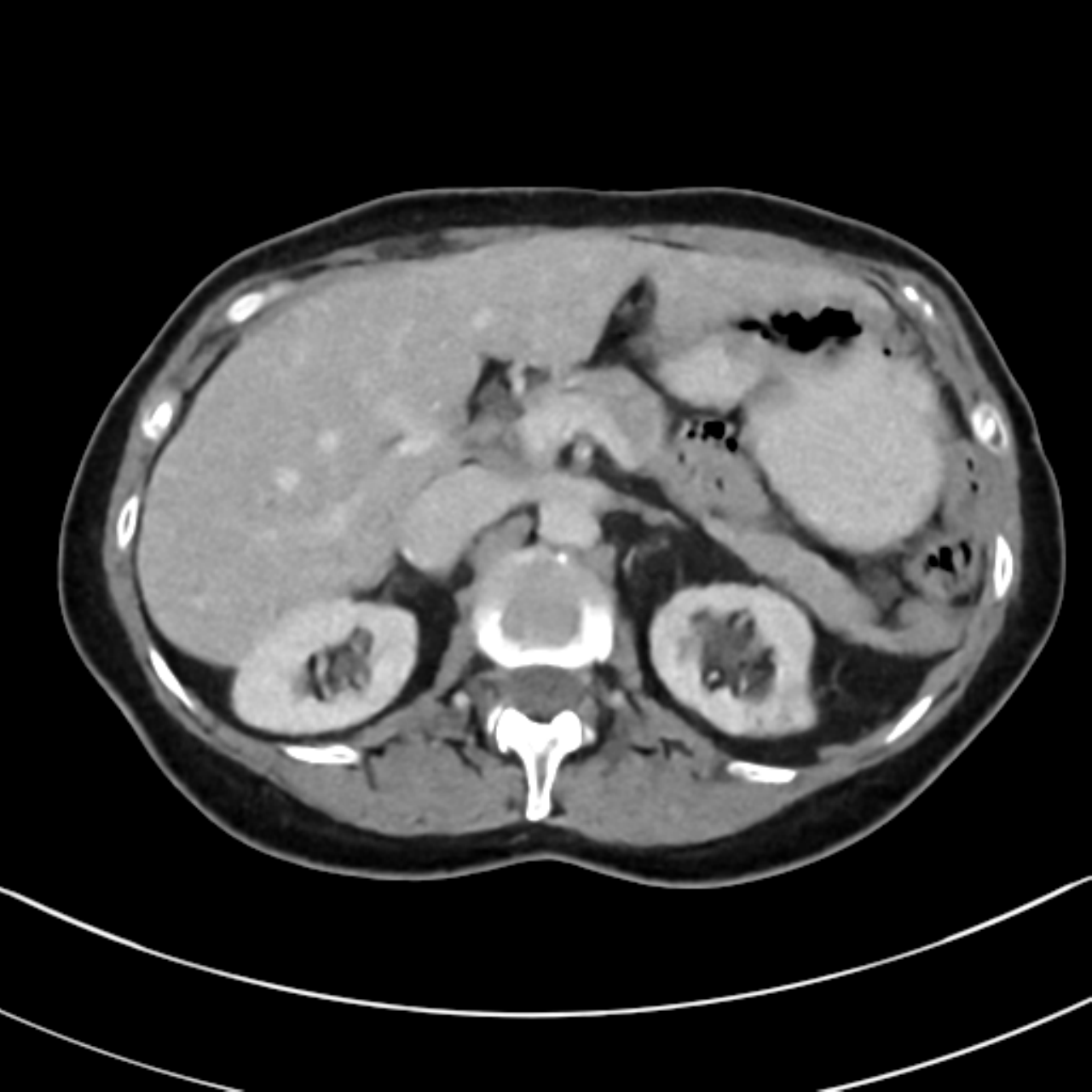}};
 \spy on (-0.35,0.2) in node [left] at%
   (0.5\linewidth,-0.3\linewidth);
 \spy on (-0.9,-0.17) in node [left] at%
   (-0.1\linewidth,0.4\linewidth);   
\end{tikzpicture}
\caption{LPD}
\end{subfigure}
\hfill
\begin{subfigure}{0.3\textwidth}
\begin{tikzpicture}[spy using outlines={
  rectangle, 
  red, 
  magnification=3,
  size=0.4\linewidth, 
  connect spies}]
 \node{\includegraphics[width=\linewidth]{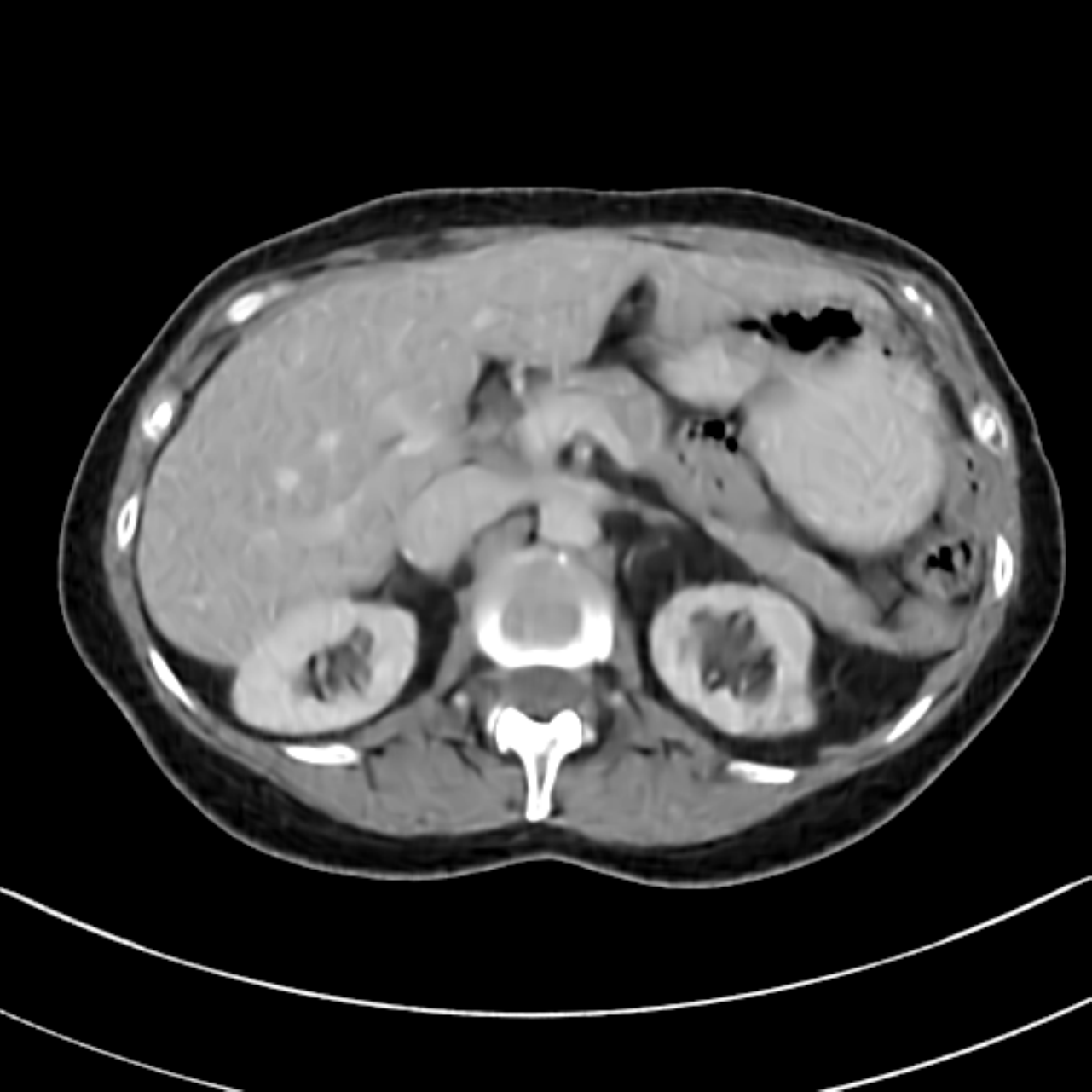}};
 \spy on (-0.35,0.2) in node [left] at%
   (0.5\linewidth,-0.3\linewidth);
 \spy on (-0.9,-0.17) in node [left] at%
   (-0.1\linewidth,0.4\linewidth);   
\end{tikzpicture}
\caption{DL}
\end{subfigure}%
\caption{Example slice of the abdomen (level 80 HU, window 370 HU). \Acp{ROI} illustrate noise texture (top left) and delineation of low contrast structures (liver-kidney border in the top left \ac{ROI} and liver-pancreas transition as well as contrasted vessel in the liver in the bottom right \ac{ROI}.}
\label{fig:slice36}
\end{figure}

\begin{figure}
\centering
\begin{subfigure}{0.3\textwidth}
\begin{tikzpicture}[spy using outlines={
  rectangle, 
  red, 
  magnification=3,
  size=0.4\linewidth, 
  connect spies}]
 \node{\includegraphics[width=\linewidth]{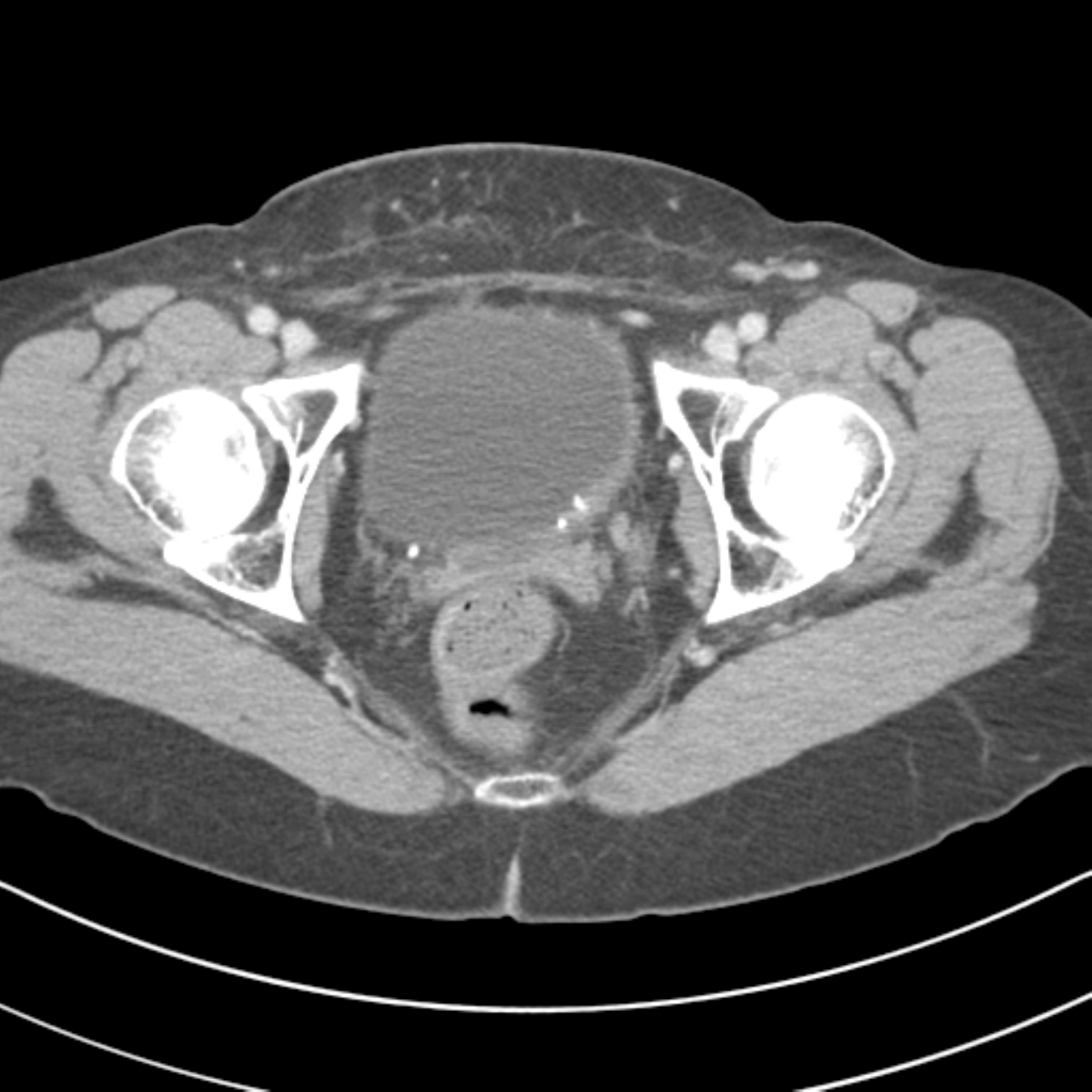}};
 \spy on (-0.55,1) in node [left] at%
   (-0.1\textwidth,-0.3\textwidth);
 \spy on (0.55,0.47) in node [left] at%
   (0.5\linewidth,-0.3\linewidth);   
\end{tikzpicture}
\caption{True}
\end{subfigure}%
\hfill
\begin{subfigure}{0.3\textwidth}
\begin{tikzpicture}[spy using outlines={
  rectangle, 
  red, 
  magnification=3,
  size=0.4\linewidth, 
  connect spies}]
 \node{\includegraphics[width=\linewidth]{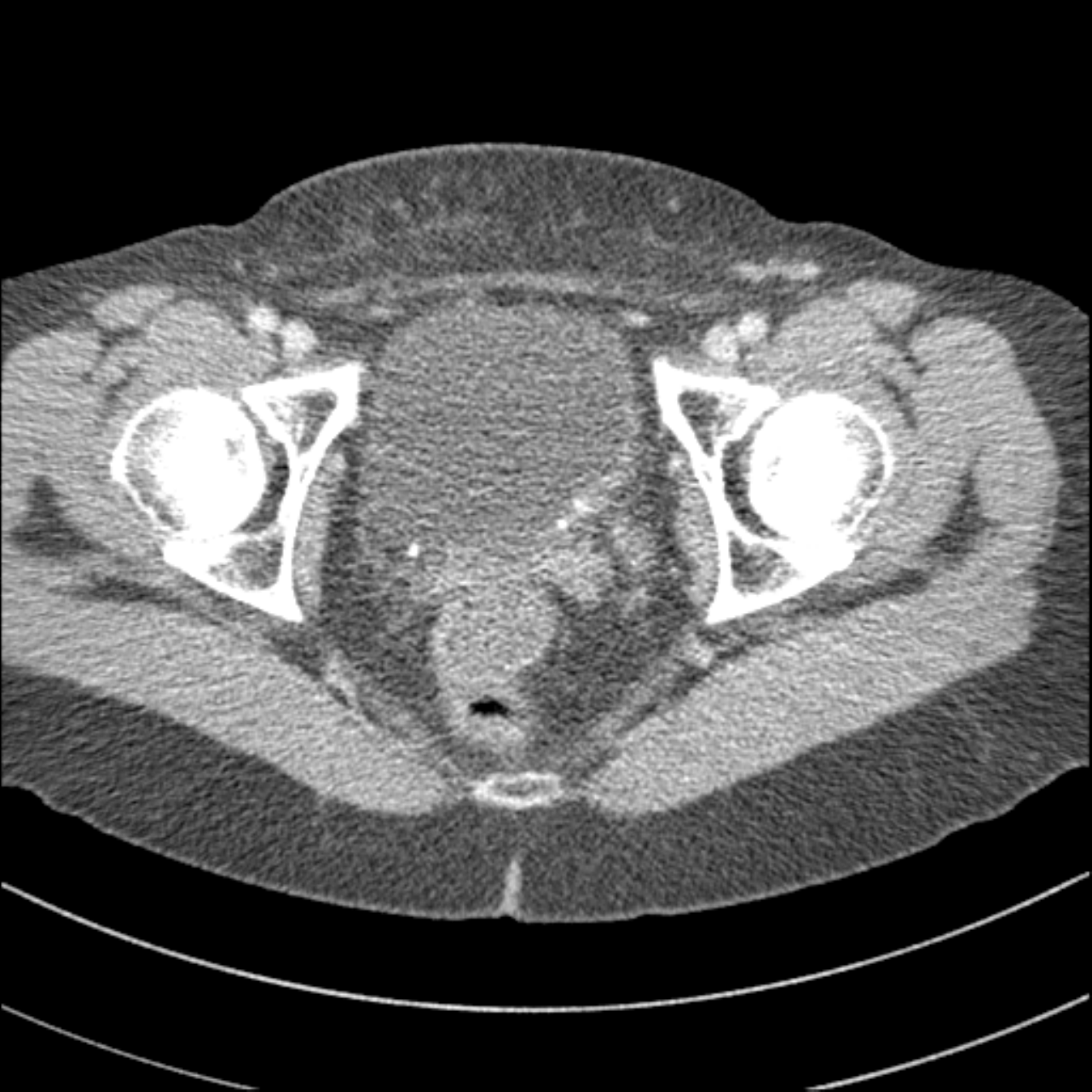}};
 \spy on (-0.55,1) in node [left] at%
   (-0.1\textwidth,-0.3\textwidth);
 \spy on (0.55,0.47) in node [left] at%
   (0.5\linewidth,-0.3\linewidth);   
\end{tikzpicture}
\caption{FBP}
\end{subfigure}%
\hfill
\begin{subfigure}{0.3\textwidth}
\begin{tikzpicture}[spy using outlines={
  rectangle, 
  red, 
  magnification=3,
  size=0.4\linewidth, 
  connect spies}]
 \node{\includegraphics[width=\linewidth]{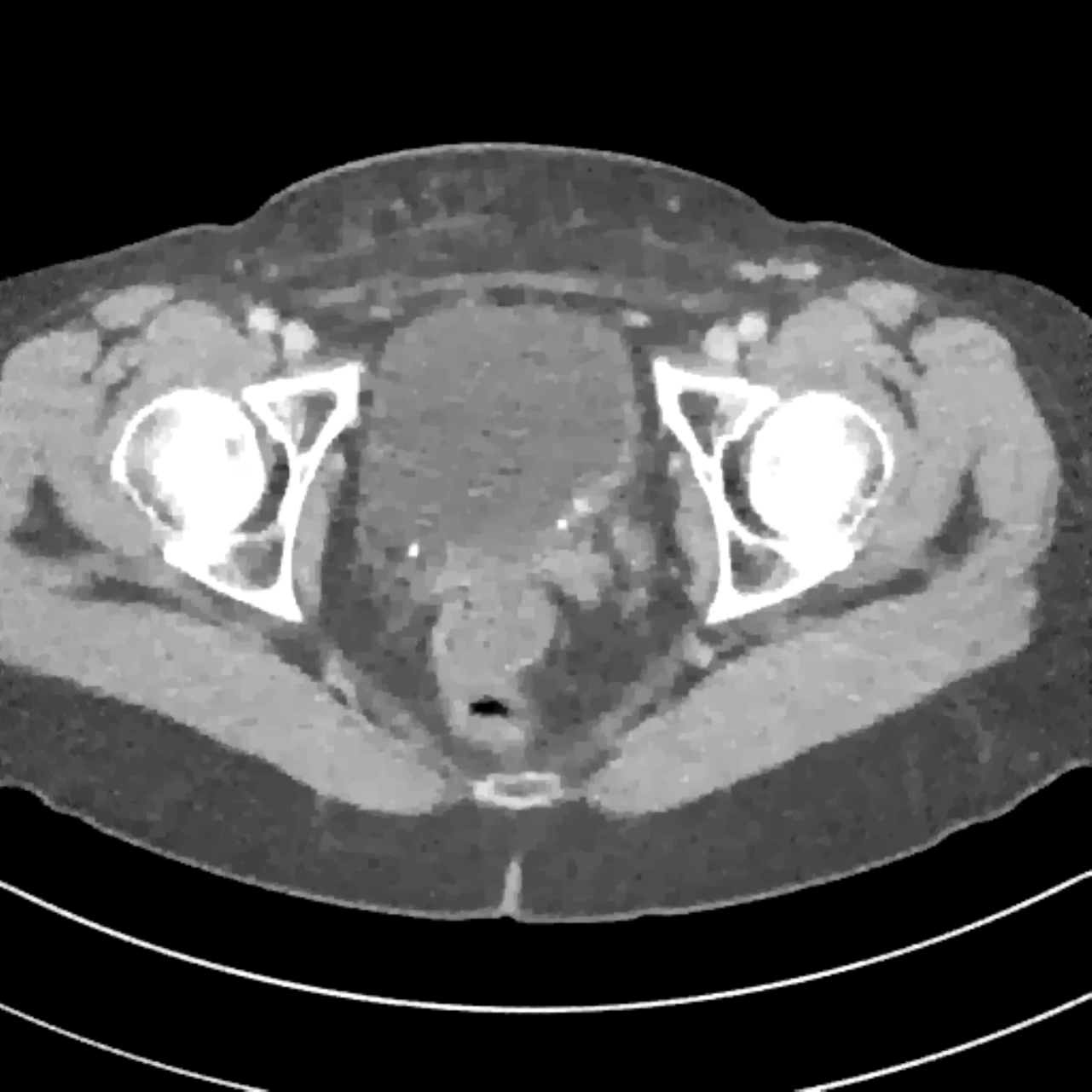}};
 \spy on (-0.55,1) in node [left] at%
   (-0.1\textwidth,-0.3\textwidth);
 \spy on (0.55,0.47) in node [left] at%
   (0.5\linewidth,-0.3\linewidth);   
\end{tikzpicture}
\caption{TV}
\end{subfigure}
\\
\begin{subfigure}{0.3\textwidth}
\begin{tikzpicture}[spy using outlines={
  rectangle, 
  red, 
  magnification=3,
  size=0.4\linewidth, 
  connect spies}]
 \node{\includegraphics[width=\linewidth]{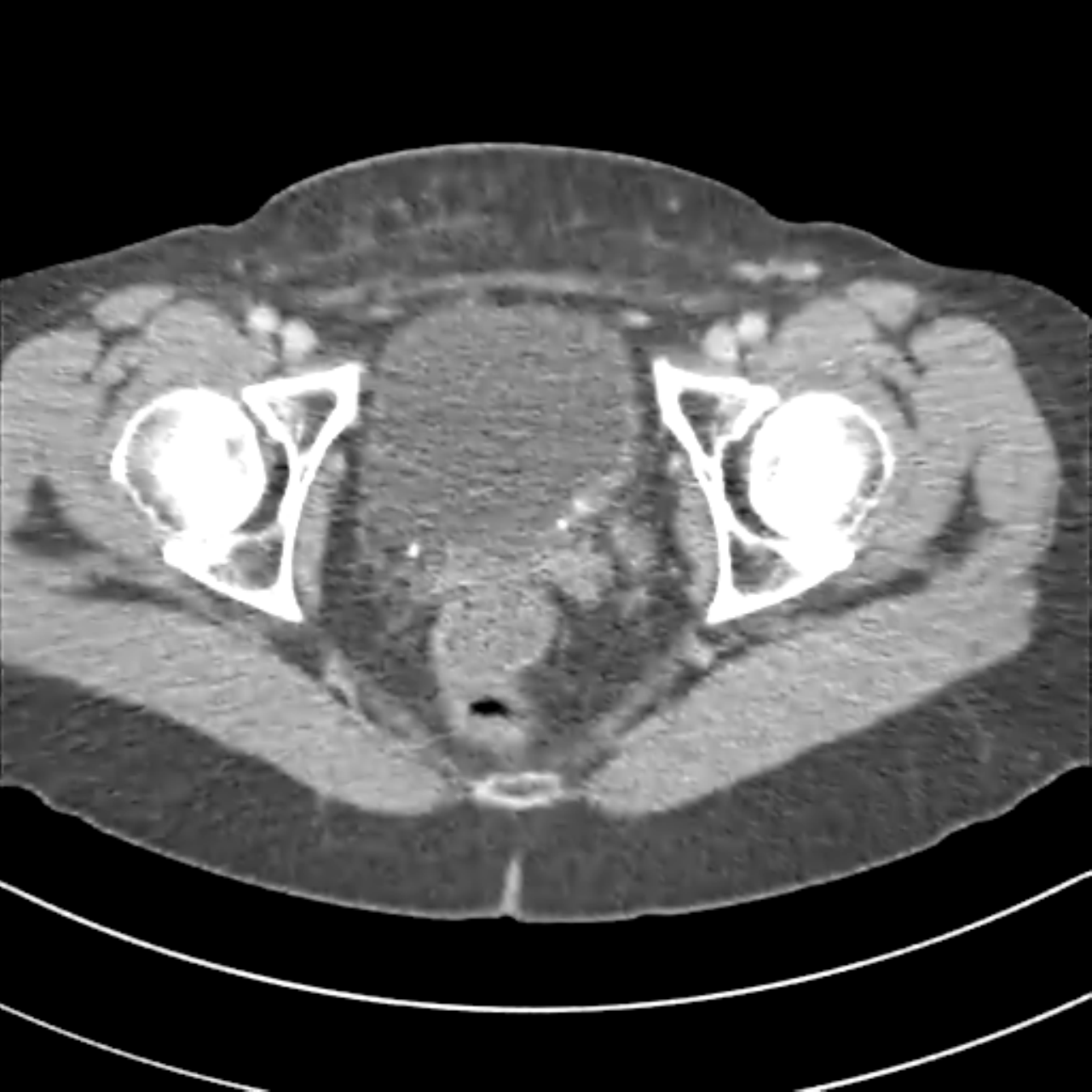}};
 \spy on (-0.55,1) in node [left] at%
   (-0.1\textwidth,-0.3\textwidth);
 \spy on (0.55,0.47) in node [left] at%
   (0.5\linewidth,-0.3\linewidth);   
\end{tikzpicture}
\caption{TGV}
\end{subfigure}%
\hfill 
\begin{subfigure}{0.3\textwidth}
\begin{tikzpicture}[spy using outlines={
  rectangle, 
  red, 
  magnification=3,
  size=0.4\linewidth, 
  connect spies}]
 \node{\includegraphics[width=\linewidth]{images/Huber_163}};
 \spy on (-0.55,1) in node [left] at%
   (-0.1\textwidth,-0.3\textwidth);
 \spy on (0.55,0.47) in node [left] at%
   (0.5\linewidth,-0.3\linewidth);   
\end{tikzpicture}
\caption{Huber}
\end{subfigure}
\hfill
\begin{subfigure}{0.3\textwidth}
\begin{tikzpicture}[spy using outlines={
  rectangle, 
  red, 
  magnification=3,
  size=0.4\linewidth, 
  connect spies}]
 \node{\includegraphics[width=\linewidth]{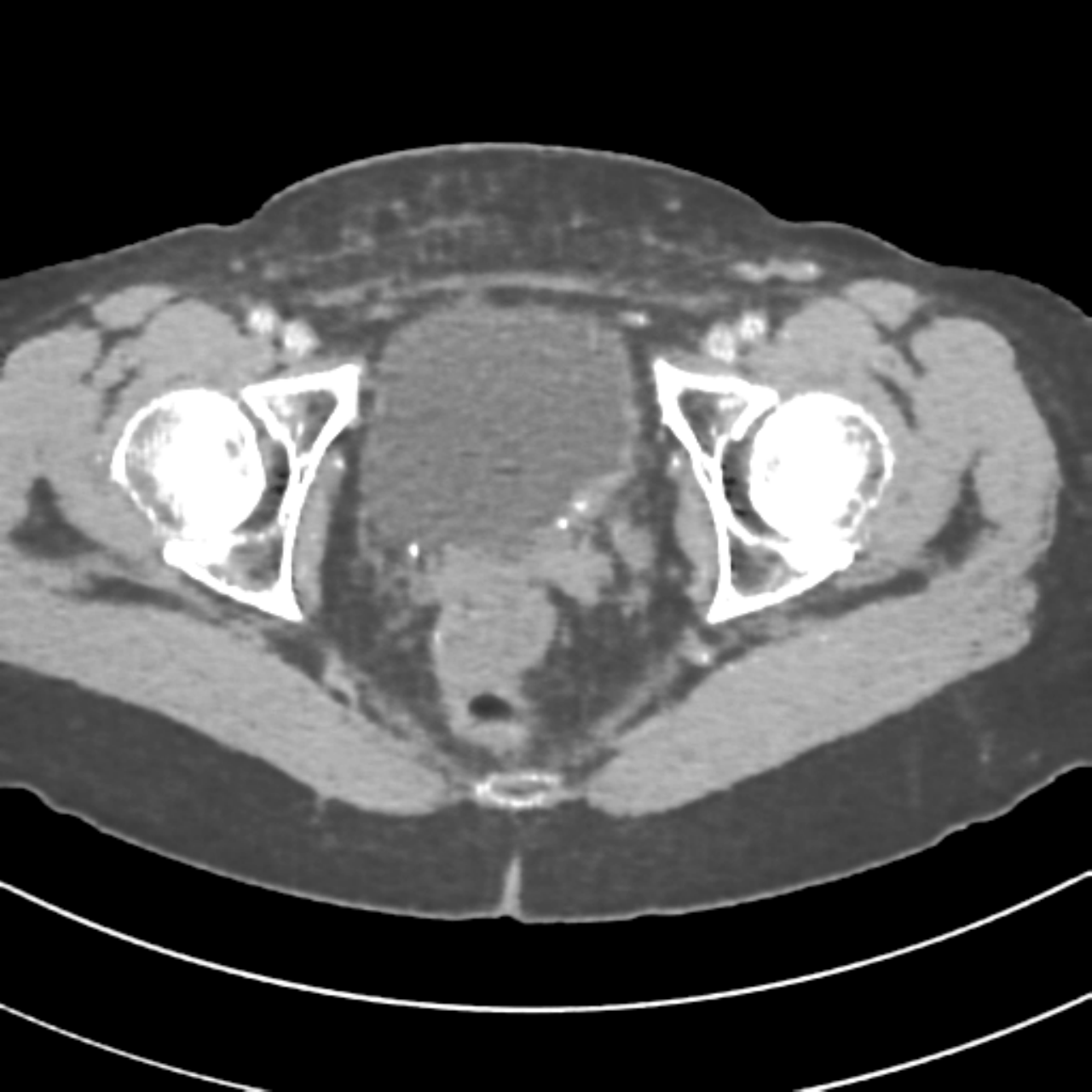}};
 \spy on (-0.55,1) in node [left] at%
   (-0.1\textwidth,-0.3\textwidth);
 \spy on (0.55,0.47) in node [left] at%
   (0.5\linewidth,-0.3\linewidth);   
\end{tikzpicture}
\caption{DIP}
\end{subfigure}
\\
\begin{subfigure}{0.3\textwidth}
\begin{tikzpicture}[spy using outlines={
  rectangle, 
  red, 
  magnification=3,
  size=0.4\linewidth, 
  connect spies}]
 \node{\includegraphics[width=\linewidth]{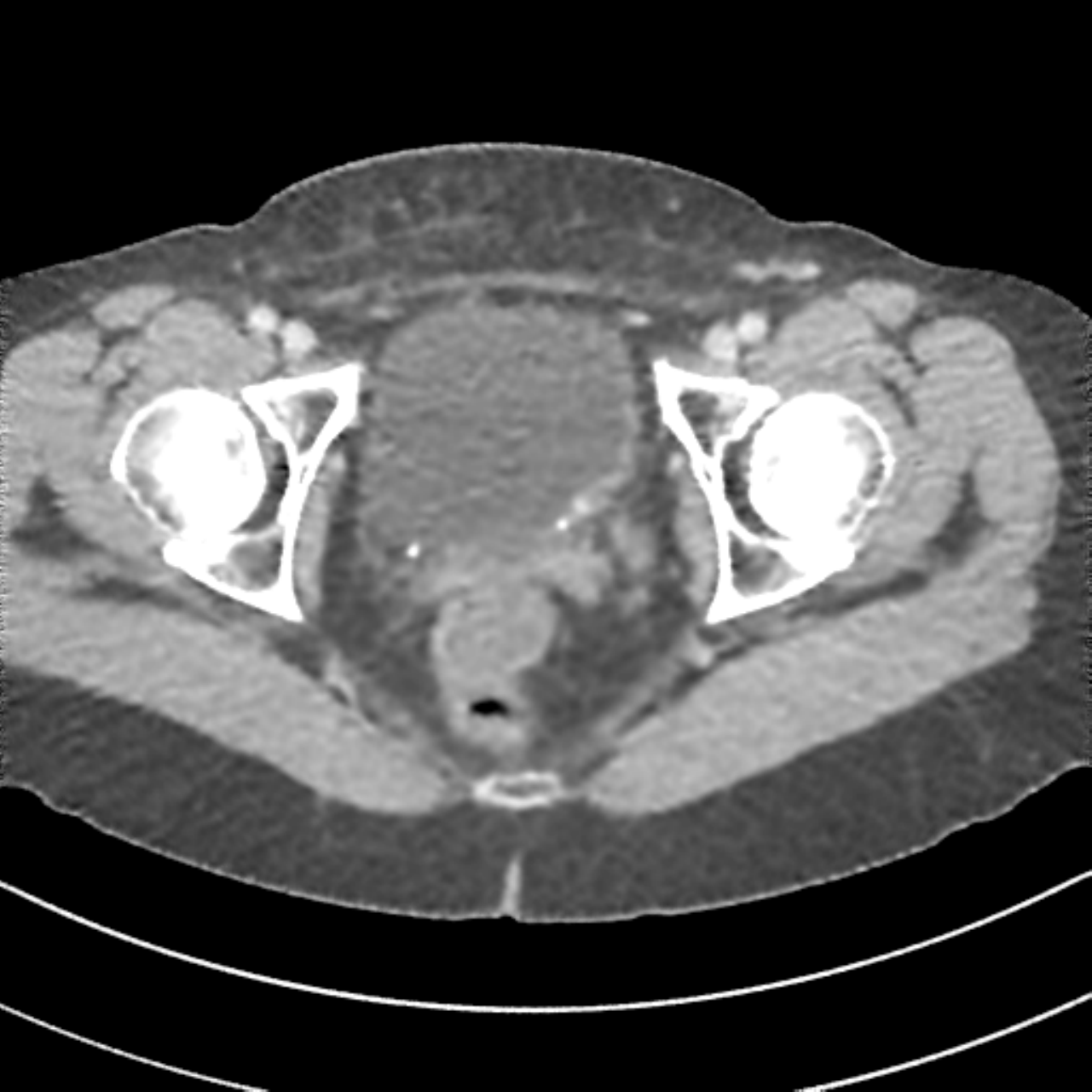}};
 \spy on (-0.55,1) in node [left] at%
   (-0.1\textwidth,-0.3\textwidth);
 \spy on (0.55,0.47) in node [left] at%
   (0.5\linewidth,-0.3\linewidth);   
\end{tikzpicture}
\caption{AR}
\end{subfigure}%
\hfill
\begin{subfigure}{0.3\textwidth}
\begin{tikzpicture}[spy using outlines={
  rectangle, 
  red, 
  magnification=3,
  size=0.4\linewidth, 
  connect spies}]
 \node{\includegraphics[width=\linewidth]{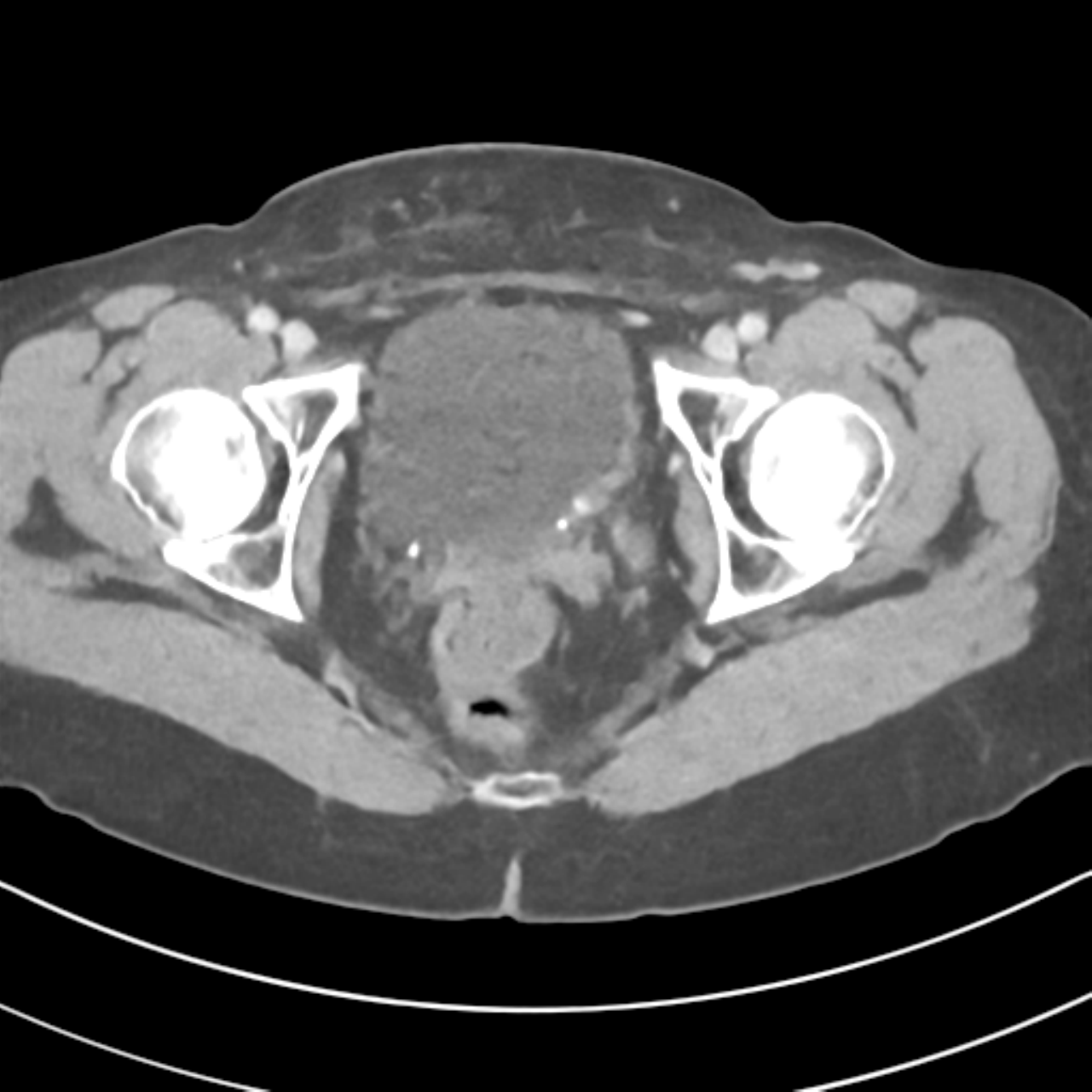}};
 \spy on (-0.55,1) in node [left] at%
   (-0.1\textwidth,-0.3\textwidth);
 \spy on (0.55,0.47) in node [left] at%
   (0.5\linewidth,-0.3\linewidth);   
\end{tikzpicture}
\caption{LPD}
\end{subfigure}
\hfill
\begin{subfigure}{0.3\textwidth}
\begin{tikzpicture}[spy using outlines={
  rectangle, 
  red, 
  magnification=3,
  size=0.4\linewidth, 
  connect spies}]
 \node{\includegraphics[width=\linewidth]{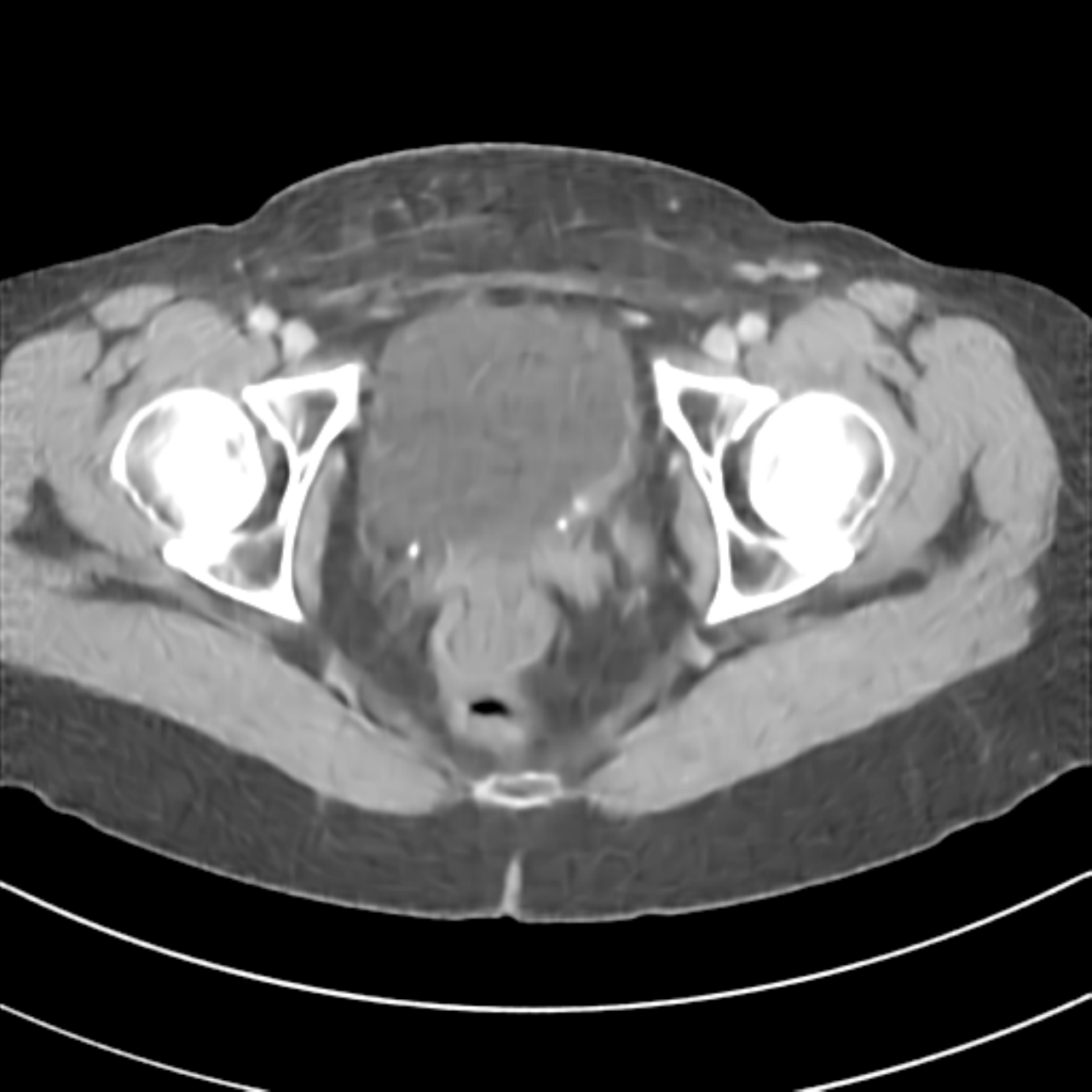}};
 \spy on (-0.55,1) in node [left] at%
   (-0.1\textwidth,-0.3\textwidth);
 \spy on (0.55,0.47) in node [left] at%
   (0.5\linewidth,-0.3\linewidth);   
\end{tikzpicture}
\caption{DL}
\end{subfigure}%
\caption{Example slice of the pelvis (level 0 HU, window 400 HU). Regions of interest highlight the structure in the fatty tissue (bottom left) and fine high-contrast structures inside the hip joint (bottom right).}
\label{fig:slice163}
\end{figure}

\begin{table}[]
    \centering
    \begin{tabular}{c c c c c c c c c}
         & \acs{FBP}  & \acs{TV} & \acs{TGV}  & Huber & \acs{DIP} & \acs{AR} & \acs{LPD} & \acs{DL} \\
    \hline 
    \acs{PSNR} &  40.95  & 46.37 & 46.70 &  46.70 & 46.29 & 46.96  & \textbf{49.68} & 48.20 \\
    \acs{SSIM} &  0.942  & 0.987 & 0.987 &  0.988 & 0.988 & 0.988 &  \textbf{0.992}  & 0.990
    \end{tabular}
    \caption{Performance metrics for various reconstruction methods in low-dose \ac{CT}. Note that \acs{LPD} requires supervised training data, whereas \acs{DL} can be trained against unsupervised data.}
    \label{tab:results}
\end{table}

\section{Conclusion}
\label{sec:conclusion}
In this work we have presented a novel view point on the dictionary learning, by showing that it maximizes the evidence lower bound similarly to variational auto-encoders. Moreover, we have shown that dictionaries can be successfully learned using optimization techniques for training neural networks. We verify that regularization learned dictionaries constitutes a powerful method for tomographic reconstruction, which is unsupervised with respect to tomographic data. We compare our method with many different model-based and data-driven approaches and conclude that it outperforms most of the methods by a large margin, while being inferior only to the learned primal-dual, which is a fully supervised deep-learning method. 

Even though dictionary learning based reconstruction leads to high quality images in terms of \ac{PSNR}, we observe that this method suffers from high frequency noise patterns shaped by dictionary atoms. On the other hand, the methods that rely on multi-layer convolutional neural networks, such as LPD and DIP, produce noise patterns with lower frequency. Therefore, we suppose that a more complex generative model should be able to provide further improvements in image quality.

Finally, all evaluated variational methods, including our dictionary learning based approach, required many iterations to converge. This compromises their applicability in practice. Addressing this problem by learned optimization \cite{banert2021accelerated} is another direction for future research.

\section*{Acknowledgments}
Jevgenija Rudzusika was supported by the Swedish Foundation of
Strategic Research grant AM13-0049, grant from the VINNOVA Open Innovation Hub project 2015-06759, and by Philips Healthcare.
Finally, Ozan Öktem was supported by the Swedish Foundation of
Strategic Research grant AM13-0049.

\printbibliography
\newpage

\appendix

\section{Comparison of patch-based and a convolutional synthesis operators}\label{app:CompPatch}

We compare the performance of patch-based synthesis operator $\SynthesisOp^p_{\hat{\Dict}}$ and the convolutional operator $\SynthesisOp_{\hat{\Dict}}$ in the case of tomographic reconstruction problem \cref{eq:RecoProblemGeneric}. 
This problem is reduced to \cref{eq:RecoProblem} in the case of convolutional synthesis operator. 
A similar formulation can be used when the linear synthesis operator $\SynthesisOp^p_{\hat{\Dict}}$ acts on patches: 
\begin{equation}\label{eq:RecoProblemPatch}
\begin{split}
    (\estim{\signal}, \estim{\dictcoeff}) 
    &\in\argmax_{(\signal, \dictcoeff)\in \RecSpace \times \DictCoeffSet} \DataLogLikelihood\bigl(\ForwardOp (\signal),  \data\bigr) + \frac{\regparamrec_1}{N_p} \sum_{i=1}^{N_p} \bigl\| \signal_i -\SynthesisOp^p_{\estim{\Dict}}(\dictcoeff_i) \bigr\|_2^2 +  \frac{\regparamrec_2}{N_p} \sum_{i=1}^{N_p} \|\dictcoeff_i\|_1
\end{split}
\end{equation}
Here $\{\signal_i\}_{i=1}^{N_p}, 
\signal_i \in \mathbb{R}^{k \times k}$ is a sequence of all overlapping $k \times k$ image patches, $\{\dictcoeff_i\}_{i=1}^{N_p} \subset \mathbb{R}^m$ is the sequence of corresponding coefficients with $m$ denoting the number of dictionary atoms. 
Finally, $N_p$ is the number of patches. 
In addition, we re-scale by a factor $k^{-2}$ simply to make values of regularization parameters $\regparamrec_1$ and $\regparamrec_2$ numerically closer to the values used in the convolutional setting \cref{eq:RecoProblem}.

On our validation set (21 image slices) we observe \ac{PSNR} values 47.94 and 48.22 for $\SynthesisOp^p_{\hat{\Dict}}$ and $\SynthesisOp_{\hat{\Dict}}$ respectively. In the case of patch-based synthesis, the best performance is achieved for $\regparamrec_1 = 10, \regparamrec_2 = 0.0006$.  We show example reconstructions obtained with different values of regularization parameters in the second row of \cref{fig:patch_based}. The best parameter setting in the patch-based case is highlighted in bold. Even in this setting, the image (\cref{fig:patch_based:d}) looks more noisy than the image synthesised with convolutional operator (\cref{fig:patch_based:b}).

We also tried to adapt the sparsity level during dictionary learning, to learn a better dictionary for the patch-based setting. However, this did not lead to significant improvements (best achieved \ac{PSNR} is 48.00).

\begin{figure}
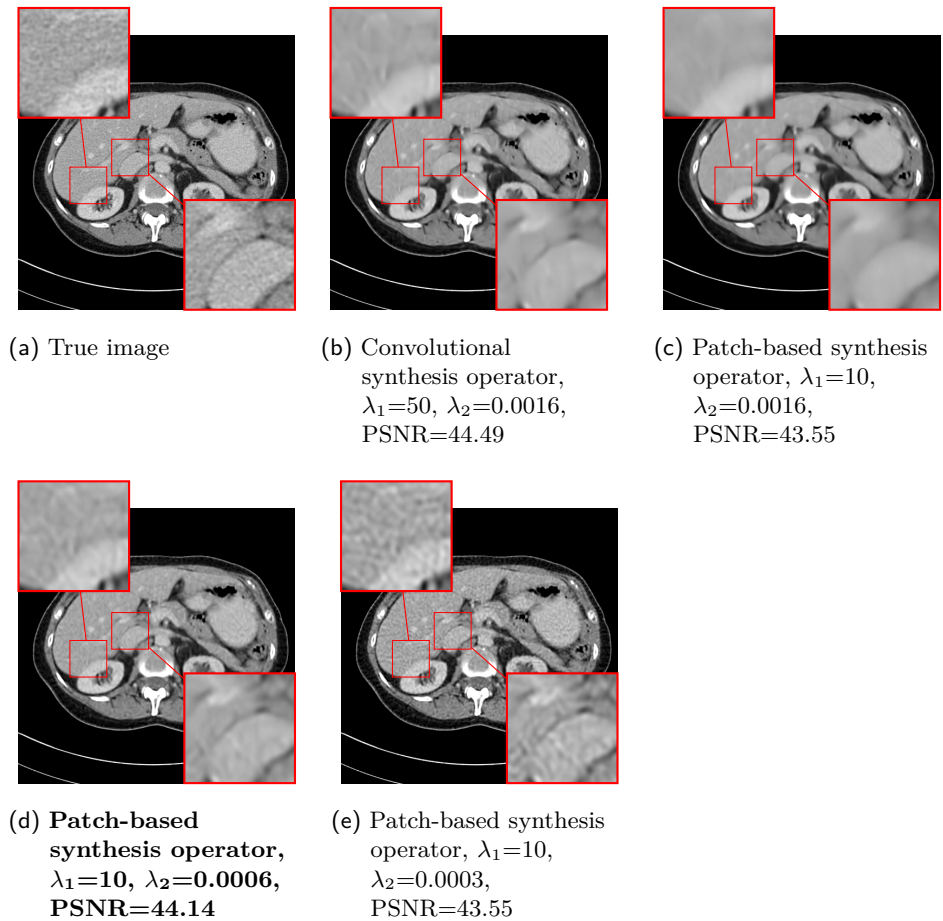

\centering
\begin{subfigure}[t]{0.3\textwidth}
\begin{tikzpicture}[spy using outlines={
  rectangle, 
  red, 
  magnification=3,
  size=0.4\linewidth, 
  connect spies}]
 \node{\includegraphics[width=\linewidth]{images/True_36}};
 \spy on (-0.35,0.2) in node [left] at%
   (0.5\linewidth,-0.3\linewidth);
 \spy on (-0.9,-0.17) in node [left] at%
   (-0.1\linewidth,0.4\linewidth);   
\end{tikzpicture}
\caption{True image}
\end{subfigure}%
\hspace{1em}
\begin{subfigure}[t]{0.3\textwidth}
\begin{tikzpicture}[spy using outlines={
  rectangle, 
  red, 
  magnification=3,
  size=0.4\linewidth, 
  connect spies}]
 \node{\includegraphics[width=\linewidth]{images/rf=0.0016_36}};
 \spy on (-0.35,0.2) in node [left] at%
   (0.5\linewidth,-0.3\linewidth);
 \spy on (-0.9,-0.17) in node [left] at%
   (-0.1\linewidth,0.4\linewidth);   
\end{tikzpicture}
\caption{Convolutional synthesis operator, $\regparamrec_1$=50, $\regparamrec_2$=0.0016, \ac{PSNR}=44.49}
\label{fig:patch_based:b}
\end{subfigure}%
\hfill
\begin{subfigure}[t]{0.3\textwidth}
\begin{tikzpicture}[spy using outlines={
  rectangle, 
  red, 
  magnification=3,
  size=0.4\linewidth, 
  connect spies}]
 \node{\includegraphics[width=\linewidth]{images/indep_rf=0.0016_36}};
 \spy on (-0.35,0.2) in node [left] at%
   (0.5\linewidth,-0.3\linewidth);
 \spy on (-0.9,-0.17) in node [left] at%
   (-0.1\linewidth,0.4\linewidth);   
\end{tikzpicture}
\caption{Patch-based synthesis operator, $\regparamrec_1$=10, $\regparamrec_2$=0.0016, \ac{PSNR}=43.55}
\end{subfigure}%
\\[1em]
\begin{subfigure}[t]{0.3\textwidth}
\begin{tikzpicture}[spy using outlines={
  rectangle, 
  red, 
  magnification=3,
  size=0.4\linewidth, 
  connect spies}]
 \node{\includegraphics[width=\linewidth]{images/indep_rf=0.0006_36}};
 \spy on (-0.35,0.2) in node [left] at%
   (0.5\linewidth,-0.3\linewidth);
 \spy on (-0.9,-0.17) in node [left] at%
   (-0.1\linewidth,0.4\linewidth);   
\end{tikzpicture}
\caption{\textbf{Patch-based synthesis operator, $\mathbf{\regparamrec_1}$=10, $\mathbf{\regparamrec_2}$=0.0006, \ac{PSNR}=44.14}}
\label{fig:patch_based:d}
\end{subfigure}%
\hfill
\begin{subfigure}[t]{0.3\textwidth}
\begin{tikzpicture}[spy using outlines={
  rectangle, 
  red, 
  magnification=3,
  size=0.4\linewidth, 
  connect spies}]
 \node{\includegraphics[width=\linewidth]{images/indep_rf=0.0003_36}};
 \spy on (-0.35,0.2) in node [left] at%
   (0.5\linewidth,-0.3\linewidth);
 \spy on (-0.9,-0.17) in node [left] at%
   (-0.1\linewidth,0.4\linewidth);   
\end{tikzpicture}
\caption{Patch-based synthesis operator, $\regparamrec_1$=10, $\regparamrec_2$=0.0003, \ac{PSNR}=43.55}
\end{subfigure}%
\hfill
\hspace{0.3\textwidth}
\caption{Example slice of the abdomen (level 80 HU, window 370 HU) reconstructed with  dictionary learning based regularization of image patches and different values of regularization parameters. \Acp{ROI} illustrate noise texture (top left) and delineation of low contrast structures (liver-kidney border in the top left \ac{ROI} and liver-pancreas transition as well as contrasted vessel in the liver in the bottom right \ac{ROI}.}
\label{fig:patch_based}
\end{figure}

\section{Learned dictionary}
\label{app:dic_viz}

\Cref{fig:learned_dics} shows learned dictionary atoms ordered by significance. 
We measure the significance of each atom by summing absolute values of the corresponding dictionary coefficients. The coefficients are calculated by solving the reconstruction problem in \cref{eq:RecoProblem} for all images in the validation set. 

Note that the learned dictionary also contains very general atoms that contain isolated point-like image features (bottom row in \cref{fig:learned_dics}). 
However, those atoms are used less significantly during reconstruction.
\begin{figure}
    \centering
    \includegraphics[width=\textwidth]{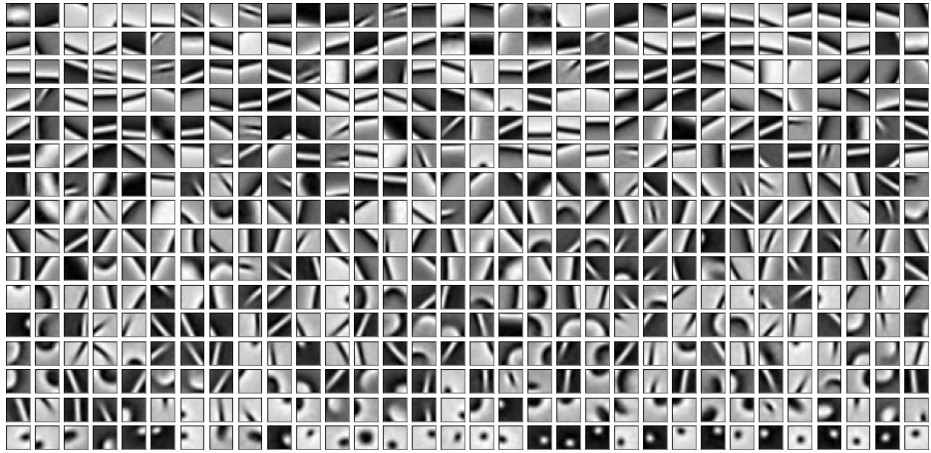}
    \caption{Learned dictionary atoms ordered by significance (from top to bottom).}
    \label{fig:learned_dics}
\end{figure}

\section{Quality of images depending on regularization parameters}

We show the dependency of dictionary learning based regularization \cref{eq:RecoProblem} on regularization parameters $\regparamrec_1$ and $\regparamrec_2$ in \cref{fig:lambda_dependency}. The optimal setting $\regparamrec_1 = 50$ and $\regparamrec_2 = 0.0016$ is highlighted in bold. 

The first parameter $\regparamrec_1$ controls the distance of reconstructed image $\signal$ from the image synthesised by the dictionary $\SynthesisOp_{\estim{\Dict}}(\dictcoeff)$. We show images for smaller values of $\regparamrec_1$ in the first row of \cref{fig:lambda_dependency}. We find that a wide range of values leads to similar results. For instance, for  $\regparamrec_1 = 50$ and $\regparamrec_1 = 10$ images are very similar. Only for much lower values, like $\regparamrec_1 = 1$ we can see that the noise pattern changes significantly. In particular, the low frequency noise shaped by the dictionary atoms becomes masked by high frequency noise. Unfortunately, quality of the image measured by \ac{PSNR} significantly decreases.

The second parameter $\regparamrec_2$ controls sparsity of dictionary coefficients. 
We show images for different values of this parameter in the second row of \cref{fig:lambda_dependency}. We can see that for a higher value ($\regparamrec_2=0.0024$) image is over-smoothed and for a lower value ($\regparamrec_2=0.0012$) image is more noisy. Nevertheless, quality of the image measured by \ac{PSNR} stays relatively high in both cases.
 
To summarize, the method seems robust to the choice of regularization parameter.

\begin{figure}
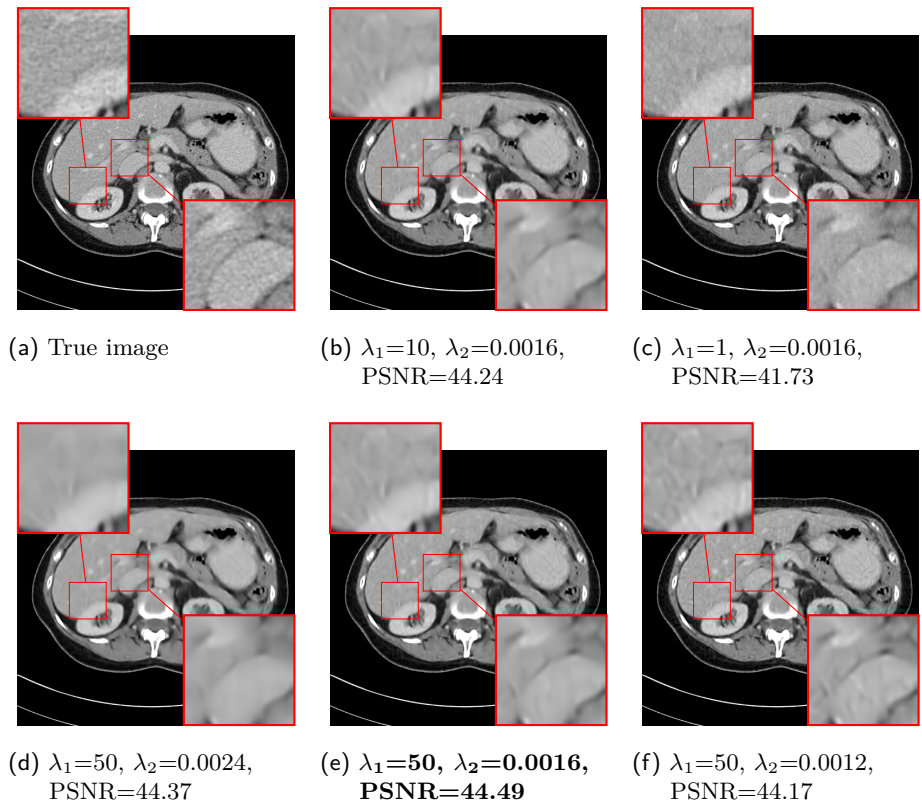

\centering
\begin{subfigure}[t]{0.3\linewidth}
\begin{tikzpicture}[spy using outlines={
  rectangle, 
  red, 
  magnification=3,
  size=0.4\linewidth, 
  connect spies}]
 \node{\includegraphics[width=\linewidth]{images/True_36}};
 \spy on (-0.35,0.2) in node [left] at%
   (0.5\linewidth,-0.3\linewidth);
 \spy on (-0.9,-0.17) in node [left] at%
   (-0.1\linewidth,0.4\linewidth);   
\end{tikzpicture}
\caption{True image}
\end{subfigure}%
\hspace{1em}
\begin{subfigure}[t]{0.3\textwidth}
\begin{tikzpicture}[spy using outlines={
  rectangle, 
  red, 
  magnification=3,
  size=0.4\linewidth, 
  connect spies}]
 \node{\includegraphics[width=\linewidth]{images/rf0=10_36}};
 \spy on (-0.35,0.2) in node [left] at%
   (0.5\linewidth,-0.3\linewidth);
 \spy on (-0.9,-0.17) in node [left] at%
   (-0.1\linewidth,0.4\linewidth);   
\end{tikzpicture}
\caption{$\regparamrec_1$=10, $\regparamrec_2$=0.0016, \ac{PSNR}=44.24}
\end{subfigure}%
\hspace{1em}
\begin{subfigure}[t]{0.3\textwidth}
\begin{tikzpicture}[spy using outlines={
  rectangle, 
  red, 
  magnification=3,
  size=0.4\linewidth, 
  connect spies}]
 \node{\includegraphics[width=\linewidth]{images/rf0=1_36}};
 \spy on (-0.35,0.2) in node [left] at%
   (0.5\linewidth,-0.3\linewidth);
 \spy on (-0.9,-0.17) in node [left] at%
   (-0.1\linewidth,0.4\linewidth);   
\end{tikzpicture}
\caption{$\regparamrec_1$=1, $\regparamrec_2$=0.0016, \ac{PSNR}=41.73}
\end{subfigure}%
\\[1em]
\begin{subfigure}[t]{0.3\textwidth}
\begin{tikzpicture}[spy using outlines={
  rectangle, 
  red, 
  magnification=3,
  size=0.4\linewidth, 
  connect spies}]
 \node{\includegraphics[width=\linewidth]{images/rf=0.0024_36}};
 \spy on (-0.35,0.2) in node [left] at%
   (0.5\linewidth,-0.3\linewidth);
 \spy on (-0.9,-0.17) in node [left] at%
   (-0.1\linewidth,0.4\linewidth);   
\end{tikzpicture}
\caption{$\regparamrec_1$=50, $\regparamrec_2$=0.0024, \ac{PSNR}=44.37}
\end{subfigure}%
\hspace{1em}
\begin{subfigure}[t]{0.3\textwidth}
\begin{tikzpicture}[spy using outlines={
  rectangle, 
  red, 
  magnification=3,
  size=0.4\linewidth, 
  connect spies}]
 \node{\includegraphics[width=\linewidth]{images/rf=0.0016_36}};
 \spy on (-0.35,0.2) in node [left] at%
   (0.5\linewidth,-0.3\linewidth);
 \spy on (-0.9,-0.17) in node [left] at%
   (-0.1\linewidth,0.4\linewidth);   
\end{tikzpicture}
\caption{\textbf{$\mathbf{\regparamrec_1}$=50, $\mathbf{\regparamrec_2}$=0.0016, \ac{PSNR}=44.49}}
\end{subfigure}%
\hspace{1em}
\begin{subfigure}[t]{0.3\textwidth}
\begin{tikzpicture}[spy using outlines={
  rectangle, 
  red, 
  magnification=3,
  size=0.4\linewidth, 
  connect spies}]
 \node{\includegraphics[width=\linewidth]{images/rf=0.0012_36}};
 \spy on (-0.35,0.2) in node [left] at%
   (0.5\linewidth,-0.3\linewidth);
 \spy on (-0.9,-0.17) in node [left] at%
   (-0.1\linewidth,0.4\linewidth);   
\end{tikzpicture}
\caption{$\regparamrec_1$=50, $\regparamrec_2$=0.0012, \ac{PSNR}=44.17}
\end{subfigure}%

\caption{Example slice of the abdomen (level 80 HU, window 370 HU) reconstructed with dictionary learning based regularization and different values of regularization parameters. \Acp{ROI} illustrate noise texture (top left) and delineation of low contrast structures (liver-kidney border in the top left \ac{ROI} and liver-pancreas transition as well as contrasted vessel in the liver in the bottom right \ac{ROI}.}
\label{fig:lambda_dependency}
\end{figure}

\end{document}